\gdef\@copyrightpermission{
\begin{minipage}{0.3\columnwidth}
\href{https://creativecommons.org/licenses/by/4.0/}
{\includegraphics[width=0.90\textwidth]{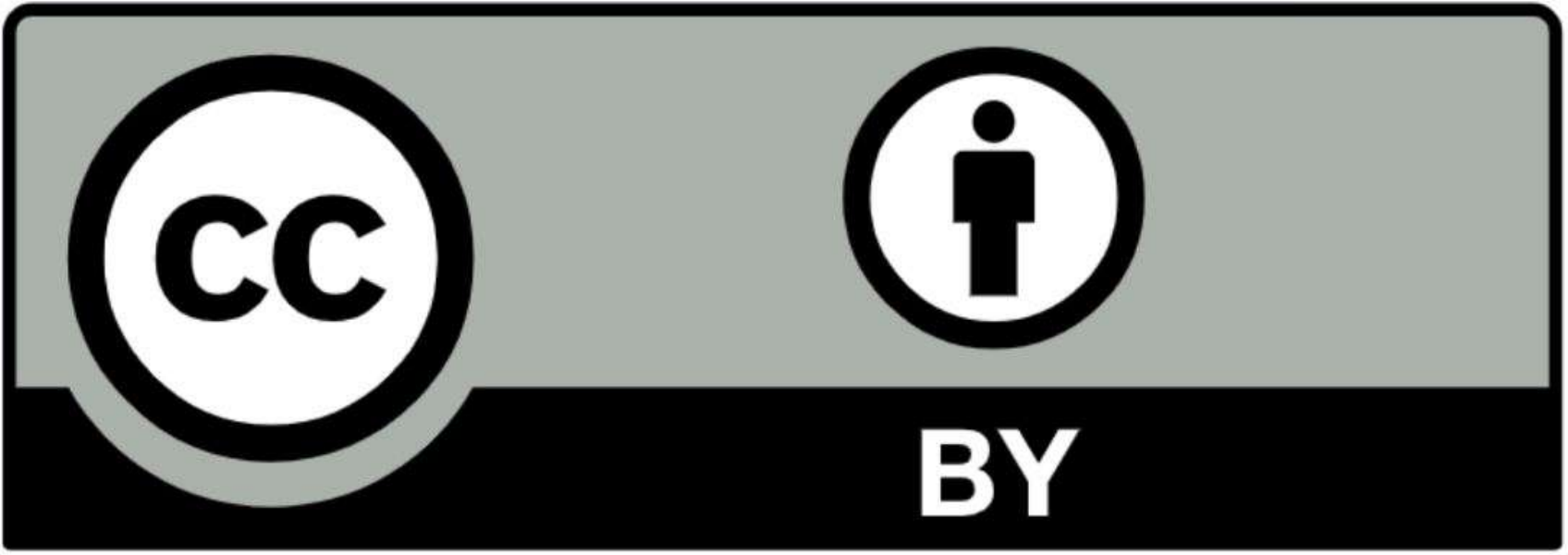}}
% {\includegraphics[width=0.90\textwidth]{}}
\end{minipage}\hfill
\begin{minipage}{0.7\columnwidth}
\href{https://creativecommons.org/licenses/by/4.0/}{This work is licensed under a Creative Commons Attribution International 4.0 License.}
\end{minipage}
\vspace{5pt}
}
\definecolor{grey}{rgb}{0.898,0.898,0.898}
\definecolor{navy}{rgb}{0.725,0.792,0.995}
\newtheorem{prop}{Proposition}
\begin{document}

\title{Self-Explainable Temporal Graph Networks based on \\ Graph Information Bottleneck}

\author{Sangwoo Seo}
\affiliation{%
  \institution{KAIST}
  \city{Daejeon}
  \country{Republic of Korea}}
\email{sangwooseo@kaist.ac.kr}

\author{Sungwon Kim}
\affiliation{%
  \institution{KAIST}
  \city{Daejeon}
  \country{Republic of Korea}}
\email{swkim@kaist.ac.kr}

\author{Jihyeong Jung}
\affiliation{%
  \institution{KAIST}
  \city{Daejeon}
  \country{Republic of Korea}}
\email{mjajthh1@kaist.ac.kr}

\author{Yoonho Lee}
\affiliation{%
  \institution{KAIST}
  \city{Daejeon}
  \country{Republic of Korea}}
\email{sml0399benbm@kaist.ac.kr}

\author{Chanyoung Park}
\authornote{Corresponding author.}
\affiliation{%
  \institution{KAIST}
  \city{Daejeon}
  \country{Republic of Korea}}
\email{cy.park@kaist.ac.kr}

\renewcommand{\shortauthors}{Sangwoo Seo et al.}

\begin{abstract}
Temporal Graph Neural Networks (TGNN) have the ability to capture both the graph topology and dynamic dependencies of interactions within a graph over time. 
There has been a growing need to explain the predictions of TGNN models due to the difficulty in identifying how past events influence their predictions.
Since the explanation model for a static graph cannot be readily applied to temporal graphs due to its inability to capture temporal dependencies, recent studies proposed explanation models for temporal graphs.
% However, since existing explainable models for temporal graph models generate post-hoc explanations, additional significant changes in the temporal graph result in retraining the base model and subsequently retraining the explanation model based on the retrained base model.
However, existing explanation models for temporal graphs rely on \textit{post-hoc} explanations, requiring separate models for prediction and explanation, which is limited in two aspects: efficiency and accuracy of explanation.
% Therefore, additional significant changes in the temporal graph result in retraining the base model and subsequently retraining the explanation model based on the retrained base model.
In this work, we propose a novel \textit{built-in} explanation framework for temporal graphs, called Self-Explainable Temporal Graph Networks based on Graph Information Bottleneck (TGIB). 
TGIB provides explanations for event occurrences by introducing stochasticity in each temporal event based on the Information Bottleneck theory.
Experimental results demonstrate the superiority of TGIB in terms of both the link prediction performance and explainability compared to state-of-the-art methods.
This is the first work that simultaneously performs prediction and explanation for temporal graphs in an end-to-end manner. The source code of TGIB is available at \url{https://github.com/sang-woo-seo/TGIB}.
\end{abstract}

% \vspace{-5mm}  
\begin{CCSXML}
<ccs2012>
<concept>
<concept_id>10010147.10010257</concept_id>
<concept_desc>Computing methodologies~Machine learning</concept_desc>
<concept_significance>500</concept_significance>
</concept>
</ccs2012>
\end{CCSXML}

\ccsdesc[500]{Computing methodologies~Machine learning}
%%
%% Keywords. The author(s) should pick words that accurately describe
%% the work being presented. Separate the keywords with commas.
\keywords{Graph Neural Network, Explainable AI, Temporal Graph}

%%
%% This command processes the author and affiliation and title
%% information and builds the first part of the formatted document.
\maketitle

\section{Introduction}\label{sec:intro}
Temporal Graph Neural Networks (TGNN) possess the capability to capture interactions over time in graph-structured data and demonstrate high utility in areas such as user-item interaction in e-commerce~\cite{li2021happens} and friend relationships in social networks~\cite{pereira2018analyzing, gelardi2021temporal}.
TGNNs incorporate both temporal dynamics and graph topology in their approach and focus on learning time-dependent node representation to predict future evolutions~\cite{xu2020inductive, rossi2020temporal, cong2023we}.
However, TGNN models are considered as black boxes with limited transparency due to the inability to discern how past events influence outcomes.
Offering insights based on the logic of predictions in TGNN contributes to an improved comprehension of the model's decision-making and provides rationality for predictions.
Explainability for TGNN can be applied in high-risk situations such as healthcare forecasting~\cite{li2021explaining, amann2020explainability} and fraud detection~\cite{sinanc2021explainable, psychoula2021explainable} to enhance the model's reliability, and assists in examining and mitigating issues related to privacy, fairness, and safety in real-world applications~\cite{doshi2017towards}.

Explainability aims to provide users with evidence within the data that influenced a model prediction. With the emerging necessity for explainability, explanation models for static graphs have been actively studied in recent times~\cite{ying2019gnnexplainer, luo2020parameterized, lee2023shift}.  
These models induce perturbations in the input of the model to detect the nodes and edges, which significantly impact the final prediction.
However, these models cannot be easily generalized to temporal graphs due to the high dynamicity of temporal graphs.
Specifically, explanation models for static graphs cannot capture the graph topology that is dynamic in nature
% that includes temporal information
\cite{you2022roland, xu2020inductive}.  
For example, in a temporal graph, multiple events may occur over time between the same pair of nodes, and these events may have 
% duplicate events can occur at the same position
% and the interactions between the same node pairs can have 
different importance depending on when the event occurred.
In other words, events that occurred a long time ago may have less influence on the current event compared to events that occurred more recently.

% Recently, T-GNNExplainer~\cite{xia2022explaining} attempted to explain temporal graphs. T-GNNExplainer consists of a navigator that learns inductive relationships between target events (those that are to be predicted) and candidate events (those that may serve as reasons for the prediction), and an explorer that explores the optimal combinations of candidate events for each target event based on Monte Carlo Tree Search (MCTS) (upper figure of Fig. \ref{fig:intro_comparison}). 
% However, it requires significant computational resources and time to train a very large model since it needs to pretrain the navigator and conduct exploration based on MCTS. This issue can be exacerbated because T-GNNExplainer generates explanations in a post-hoc manner. Post-hoc explanation models require training the explanation model again after initially training the base model (i.e., TGNN) in order to generate explanations. 
% However, the newly trained explanation model may significantly change compared with the previously trained base model due to the dynamic nature of temporal graphs.
% This 
% % may lead to significant changes in the model compared to the previously trained model, and it 
% results in the inconvenience of retraining not only the base model, but also the explanation model based on the retrained base model. Additionally, since post-hoc explanation models provide explanations by examining the behavior of an already trained base model, it becomes challenging to fully comprehend the base model's learning process, and provide accurate explanations~\cite{zhang2022protgnn}.

Recently, T-GNNExplainer~\cite{xia2022explaining} attempted to explain the model predictions on temporal graphs. Specifically, T-GNNExplainer consists of a navigator that learns inductive relationships between target events (those that are to be predicted) and candidate events (those that may serve as reasons for the prediction), and an explorer that explores the optimal combinations of candidate events for each target event based on Monte Carlo Tree Search (MCTS)~\cite{silver2017mastering} (upper part of Figure~\ref{fig:intro_comparison}). T-GNNExplainer is a \textit{post-hoc} explanation method because explanations are generated based on a pretrained base model (i.e., TGNN).
Despite its effectiveness, T-GNNExplainer has two major drawbacks that originate from its post-hoc manner of generating explanations.
First, since a temporal graph consistently encounter changes in the topology due to its dynamic nature, the base model needs to be consistently retrained. This results in the repeated retraining of the explanation model (i.e., navigator and explorer) based on the retrained base model (i.e., TGNN), which makes T-GNNExplainer inefficient especially when the base model is large.
The complexity issue aggravates as the explanations are generated based on MCTS, which is a highly inefficient search algorithm.
Second, since post-hoc explanation methods provide explanations by examining the behavior of an already trained base model, it becomes challenging to fully comprehend the learning process of the base model, and to provide accurate explanations~\cite{zhang2022protgnn}.
% , it requires significant computational resources and time to train \textcolor{red}{a large model.} 
% The complexity issue becomes especially crucial as T-GNNExplainer generates explanations in a post-hoc manner.  That is, since a temporal graph consistently encounter changes in the topology due to its dynamic nature, the base model needs to be consistently retrained, and this results in a repeated retraining of the explanation model based on the retrained base model, which makes T-GNNExplainer inefficient. 
% again after initially training the base model (i.e., TGNN), the complexity issue is exacerbated.
% The complexity issue is exacerbated because T-GNNExplainer generates explanations in a post-hoc manner. Post-hoc explanation models require training the explanation model again after initially training the base model (i.e., TGNN) in order to generate explanations. 
% However, the newly trained explanation model may significantly change compared with the previously trained base model due to the dynamic nature of temporal graphs.
% This 
% may lead to significant changes in the model compared to the previously trained model, and it 
% results in the inconvenience of retraining not only the base model, but also the explanation model based on the retrained base model. 
% We argue that both of the drawbacks are originated from the the post-hoc explanation models are not suitable for temporal graphs.

As a solution to address the drawbacks of post-hoc explanation methods when applied to temporal graphs, we propose to allow the model to simultaneously perform both predictions and explanations in temporal graphs by generating intrinsic explanations within the model itself (i.e., \textit{built-in} explanation
method) (lower part of Figure~\ref{fig:intro_comparison}). Existing post-hoc explanation methods for temporal graphs such as T-GNNExplainer focus on deriving subgraphs that generate predictions as similar as possible to the predictions of the base model for the target event. 
On the other hand, since our proposed built-in explanation method does not have a base model, we induce interactions between the target event representation at the current timestamp and the candidate event representations at past timestamps to extract the importance probability of each candidate event.
To consider the interaction of representations at different timestamps, we generate time-aware representations by taking into account the time spans between the target event and candidate events.
The time-aware representations facilitates our model to identify important candidate events that are used as explanations for the model predictions. 
Our goal is to detect significant past events in temporal graphs based on the constructed time-aware representations.
% Our goal is to detect significant past events in temporal graphs based on the Information Bottleneck (IB) approach, which is one of the most effective methods for detecting important subgraphs.

\begin{figure}[t] %%% t: top, b: bottom, h: here
\begin{center}
\includegraphics[width=0.70\linewidth]{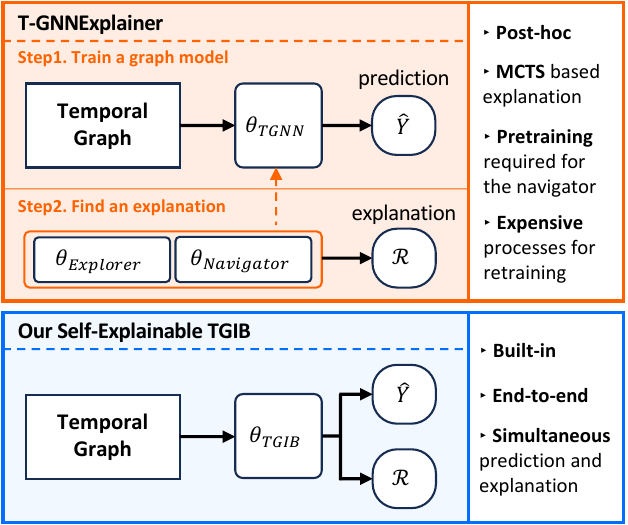}
\end{center}
\vspace{-3mm}
\caption{
Comparison between T-GNNexplainer and TGIB.}
\label{fig:intro_comparison}
\vspace{-5ex}
\end{figure}

\looseness=-1
To this end, we propose a novel \textit{built-in} explanation framework for temporal graphs, called Self-Explainable \textbf{T}emporal Graph Networks based on \textbf{G}raph \textbf{I}nformation \textbf{B}ottleneck (TGIB). 
The main idea is to build an {end-to-end model} that can simultaneously generate predictions for temporal graphs along with explanations based on the Information Bottleneck (IB) approach, which enables the model to detect important subgraphs by leveraging the time-aware representations.
% The main idea is to build a model based on IB theory that can generate its own explanations along with predictions for temporal graphs. 
Specifically, TGIB considers the interaction between the target event and candidate events to extract important candidate events, eventually predicting the occurrence of the target event. 
% We generate time-aware event representations for each event, which are used to derive explanations based on the IB theory, using self-attention by introducing time encoding. 
% Time-aware representations are used to derive explanations based on the IB theory.
We formalize the prediction process for temporal graphs with the IB to restrict the flow of information from candidate events to predictions by injecting stochasticity into edges~\cite{shannon1948mathematical}. The stochasticity for label-relevant components decreases during training, whereas the stochasticity for label-irrelevant components is maintained. This difference in stochasticity eventually provides explanations for the occurrence of the target events. We expect improved generalization performance of TGIB by penalizing the amount of information from input data.

We conducted extensive experiments to evaluate the prediction performance and the explainability of TGIB in the event occurrence prediction task. Our results show that TGIB outperforms existing link prediction models in both transductive and inductive environments. We also evaluated the explainability of TGIB in capturing the label information by evaluating its prediction performance with only the detected explanation graphs. 
We also demonstrate that explanation graphs with varying sparsities exhibit a higher explanation performance than existing explanation models.
% TGIB shows that explanation graphs extracted from various sparsities have 
Overall, our results show that TGIB not only significantly improves the performance of the event occurrence prediction, but also provides superior performance in terms of explanations. To the best of our knowledge, this is the first study to simultaneously perform predictions and explanations in temporal graphs. 

In summary, our main contributions are summarized as follows:
\begin{itemize}[leftmargin=0.5cm]
    \item We propose an explainable graph neural network model for temporal graphs that can simultaneously perform prediction and explanation.
    \item We provide a theoretical background of TGIB based on the IB framework regarding the extraction of important past events for predicting the occurrence of the target event.
    \item Extensive experiments demonstrate that TGIB outperforms state-of-the-art methods in terms of both link prediction performance and explainability. 
\end{itemize}
    % \textbf{1)} We propose an explainable graph neural network model for temporal graphs that can simultaneously perform prediction and explanation.
    % \textbf{2)} We provide a theoretical background of TGIB based on the IB framework regarding the extraction of important past events for predicting the occurrence of the target event.
    % \textbf{3)} Extensive experiments demonstrate that TGIB outperforms state-of-the-art methods in terms of both the link prediction performance and explainability. 
  
\vspace{1mm}

\section{Related work} \label{sec:relatedwork}
\subsection{Temporal Graph Neural Networks} % EdgeBank는 뺌
Unlike ordinary Graph Neural Networks for a static graph, TGNNs produce dynamic node embeddings from a temporal graph evolving with a series of events.
Early methods~\cite{dai2016deep, kumar2019predicting, trivedi2019dyrep} used RNN-based architectures to produce temporal node embeddings by only considering nodes involved in each of specific events.
As these methods merely use direct connectivity represented as a single event, the Self-Attention Mechanism (SAM)~\cite{vaswani2017attention} was adopted in recent methods for modeling more complex spatial and temporal relationships.
TGAT~\cite{xu2020inductive} applied SAM for simultaneous modeling of both spatial and temporal relationships with functional time encodings based on Bochner's theorem. 
TGN~\cite{rossi2020temporal} first updates the node memory for temporal dependency by using an RNN-based model and computes node embeddings by a SAM-based model with spatial and temporal information similar to~\cite{xu2020inductive}.
TCL~\cite{wang2021tcl} also utilized the SAM-based architecture considering both spatial and temporal dependencies while using a contrastive learning scheme by regarding an interacted node as a positive sample. 
On the other hand, GraphMixer~\cite{cong2023we} showed effectiveness with a simple MLP-based architecture and argued neither RNN nor SAM is mandatory for TGNNs.

\vspace{-1mm}
\subsection{Graph Information Bottleneck}
As the Information Bottleneck (IB) principle enables a model to extract information within the input data that is relevant to the target (i.e., label) information, it has been widely adopted in the various fields of machine learning~\cite{alemi2016deep, peng2018variational, higgins2016beta}.
Inspired by their successes, recent studies regarding the IB principle on graph-structured data were proposed.
GIB~\cite{wu2020graph} extended the IB principle on GNNs by extracting both minimal and sufficient information from both the graph structure and node features.
%, thereby achieving robustness against adversarial attacks on graphs.
Extending this, GIB~\cite{yu2020graph} utilized the IB principle for recognizing an important subgraph (i.e., IB-Graph) within the input graph, and then applied the IB-Graph for improving the graph classification performance.
Additionally, VGIB~\cite{yu2022improving} utilized learnable random noise injection in the subgraph recognition process to enable flexible subgraph compression.
Meanwhile, CGIB~\cite{lee2023conditional} applied the graph information bottleneck in molecular relational learning by finding a core subgraph of one molecule based on the paired molecule and the target label.
Furthermore, PGIB~\cite{seo2023interpretable} approached the information bottleneck principle from a prototype perspective to provide prototypes with the subgraph from the input graph which is important for model prediction.

\vspace{-1mm}
\subsection{GNN Explainability} 
\looseness=-1
Although Graph Neural Networks (GNNs) have been shown to be effective on graph data~\cite{kipf2016semi, velivckovic2017graph},  
analyzing the reason of the prediction and the decision-making process of these models has been a long-standing challenge due to their complex architectures.
For this reason, approaches for explainable AI (XAI) have been recently proposed to understand black-box GNNs~\cite{pope2019explainability, ying2019gnnexplainer, vu2020pgm, luo2020parameterized}.
Despite their effectiveness, as these approaches are \textit{post-hoc} methods, they are both ineffective and inefficient when changes occur not only in the training data but also the trained model to be explained. 
Therefore, self-explainable (i.e., \textit{built-in})~\cite{miao2022interpretable, zhang2022protgnn, seo2023interpretable} approaches have been recently gained attention. They contain an explanatory module inside the model to make predictions and explanations simultaneously, addressing the limitations of post-hoc approaches. 
However, all the aforementioned methods are designed for static graphs, and cannot be readily applied to temporal graphs due to their dynamicity.
Recently, T-GNNExplainer~\cite{xia2022explaining} is proposed to give an explanation on TGNNs trained on temporal graphs, which, however, has major drawbacks due to its post-hoc manner as mentioned in Section~\ref{sec:intro}.
Additionally, STExplainer~\cite{tang2023explainable} generates separate explanation graphs for spatial and temporal graphs in traffic and crime prediction. 
However, to explain event occurrence predictions in a way that is easy for humans to understand, it is necessary to generate a comprehensive explanation graph as a set of temporal events.
To this end, we propose TGIB that can simultaneously generate predictions and explanations in temporal graphs by detecting past events that are important for the predictions based on the IB 
principle.

\section{Preliminaries} \label{preliminarysection}
% As preliminaries, we define a few notations and concepts.

\subsection{Temporal Graph Model}
\looseness=-1
A temporal graph contains a series of continuous events $S= \left\{ e_1, e_2, \cdots \right\} $ with timestamps, where $e_i= \left\{ u_i, v_i, t_i, \textrm{att}_i \right\} $ indicates an interaction event between node $u_i$ and $v_i$ at timestamp $t_i$ with edge attribute $\textrm{att}_i$. The event set $S$ composes a temporal graph $\mathbf{G} =(\mathbf{V}, \mathbf{E})$, where $\mathbf{E}$ is regarded as edges with timestamps and $\mathbf{V}$ represents the nodes included in $\mathbf{E}$. Since the definitions of $\mathbf{V}$ and $\mathbf{E}$ are interdependent, we consider a temporal graph $\mathbf{G}$ as a set of events.  
We define $\mathbf{G}^k$ as the graph constructed immediately before the timestamp $t_k$ which includes all events $\left\{ e_1, e_2, \cdots , e_{k-1} \right\} $ excluding $e_{k}$. Let $f$ denote a self-explainable temporal graph model that we aim to learn. The model $f$ simultaneously predicts the occurrence of interactions between two nodes at a certain timestamp, and explains the reason for its prediction regarding the occurrence or absence of the event $e_k$.
The output of the model $f$, i.e., $f(\mathbf{G}^k)[e_k]$, consists of two components: the label prediction indicating whether the event $e_k$ is present or absent, i.e., $\hat{Y}$, and an explanation for the presence or absence of $e_k$, i.e., $\mathcal{R}^k\in\mathcal{G}^k$, where $\mathcal{G}^k$ is the $L$-hop computation graph of $e_k$, and 
$\mathcal{R}^k$ is a subgraph of $\mathcal{G}^k$ considered as important events. Note that $\mathcal{G}^k$ of $e_k=\left\{ u_k, v_k, t_k, \textrm{att}_k \right\} $ is a combination of the $L$-hop computation graphs of node $u_k$ and node $v_k$.
% Let $\mathcal{G}^k$ denote a computational graph of event $e_k$ at the timestamp $t_k$.
% The output of the model $f$, i.e., $f(\mathbf{G}^k)[e_k]$, consists of two components: the probability output for the predicted event $e_k$, i.e., $p(e_k)$, and an explanation for the presence or absence of $e_k$, i.e., $\mathcal{R}^k$, where
% $\mathcal{R}^k$ is a subgraph of $\mathcal{G}^k$ considered as important events.  

\subsection{Graph Information Bottleneck}
The mutual information denoted as $I(X; Y)$ between two random variables $X$ and $Y$ is formally defined as:
\begin{equation}\label{eq:MI}
% \small
I(X;Y) = \int_{X}^{} \int_{Y}^{} p(x,y)\log\frac{p(x,y)}{p(x)p(y)}\text{d}x \text{d}y.
\end{equation}
Given an input denoted as $X$ and its corresponding label $Y$, the Information Bottleneck (IB)~\cite{tishby2000information} aims to optimize the following objective function in order to derive a bottleneck variable $Z$ as:
\begin{equation}\label{eq:IB}
% \small
\min_Z -I(Y;Z)+\beta I(X; Z),
\end{equation}
\noindent where $\beta$ enables the regulation of the balance between two terms as the Lagrange multiplier.
\looseness=-1
The IB principle has found recent application in learning a bottleneck graph, referred to as the IB-Graph, for a given graph $\mathcal{G}$, which aims to preserve the minimal necessary information concerning the properties of $\mathcal{G}$ while capturing the maximal information about the label $Y$.
This approach compressively identifies label-related information from the original graph $\mathcal{G}$, inspired by the Graph Information Bottleneck (GIB) principle by optimizing the objective function as:
\begin{equation}\label{eq:gib}  
% \small
\min_{\mathcal{G}_{sub}} - I(Y; \mathcal{G}_{sub}) + \beta I(\mathcal{G} ; \mathcal{G}_{sub}),
\end{equation}
where $\mathcal{G}_{sub}$ is the IB-Graph and $Y$ is the label of $\mathcal{G}$.
The first term aims to maximize the mutual information between the graph label and the subgraph to include as much graph label information $Y$ as possible in the subgraph $\mathcal{G}_{sub}$.
The second term aims to minimize the mutual information between the original graph and the subgraph to include the original graph $\mathcal{G}$ in the subgraph $\mathcal{G}_{sub}$ to a minimum extent.

\section{Methodology} \label{sec:method}
We present our proposed method, called TGIB. We introduce GIB-based objective for a temporal graph (Section~\ref{subsec:obj}), time-aware event representation (Section~\ref{subsec:ekej}),  neural network parameterization for each term of the objective function (Section~\ref{subsec:compression} and \ref{subsec:prediction}) and spurious correlation removal (Section~\ref{subsec:inductivebias}). 
The entire process is presented in Figure~\ref{fig:architecture}.
We also include the pseudocode in Appendix~\ref{apx:alg}.

\subsection{GIB-based Objective for Temporal Graph}  \label{subsec:obj}

We provide the objective of the Graph Information Bottleneck for temporal graphs. 
TGIB extracts a bottleneck code $\mathcal{R}^k$ for the target edge $e_k$ from its $L$-hop neighborhood $\mathcal{G}^k$.
Specifically, the bottleneck code $\mathcal{R}^k$ is a subgraph of $e_k$'s $L$-hop computation graph. The bottleneck mechanism on neighborhood information provides explanations for the prediction of $e_k$. 
The objective function of the graph information bottleneck is provided as follows:
\begin{equation}\label{eq:tgib}  
% \small
\min_{\mathcal{R}^k} \underbrace{- I \left(Y_k ; \mathcal{R}^k \right)}_\text{Section~\ref{subsec:prediction}} + \beta \underbrace{I \left(\mathcal{R}^k ; e_k, \mathcal{G}^k \right)}_\text{Section~\ref{subsec:compression}}.
\end{equation}
where $Y_k$ is the label information regarding the occurrence of $e_k$.
The first term $- I \left(Y_k ;\mathcal{R}^k \right)$ allows $\mathcal{R}^k$ to sufficiently learn label-relevant information, while the second term $I \left(\mathcal{R}^k ; e_k, \mathcal{G}^k \right)$ ensures that $\mathcal{R}^k$ efficiently includes only important information related to $e_k$ and $\mathcal{G}^k$ and removes unnecessary information. However, directly optimizing Equation~\ref{eq:tgib} is challenging because of the difficulty of directly calculating the mutual information.

Inspired by \cite{alemi2016deep}, we obtain the upper bound of $-I \left(Y_k ; \mathcal{R}^k \right)$ in Equation~\ref{eq:tgib} as follows:
% \begin{small}
\begin{equation}\label{eq:prediction}  
\begin{aligned}
\small
- I(Y_k ; \mathcal{R}^k) &= \mathbb{E}_{Y_k, e_k,\mathcal{R}^k}\left[ -\log p \left( Y_k | \mathcal{R}^k \right) \right] -H \left( Y \right)  \\ %+  \mathbb{E}_{Y_k}\left[ \log p(Y_k ) \right] \\
&\leq \mathbb{E}_{Y_k, \mathcal{R}^k}\left[ -\log q_{\theta} \left( Y_k | \mathcal{R}^k \right) \right] - H \left( Y \right)\\
&\leq \mathbb{E}_{Y_k,\mathcal{R}^k}\left[ -\log q_{\theta} \left( Y_k | \mathcal{R}^k \right) \right] \coloneqq \mathcal{L}_{\textrm{cls}},
\end{aligned}
\end{equation}
% \end{small}
where $q_{\theta} \left( Y_k | \mathcal{R}^k \right)$ is the variational approximation of $p \left( Y_k |\mathcal{R}^k \right)$. 
We model $q_{\theta} \left( Y_k |\mathcal{R}^k \right)$ as a predictor which outputs predictions for the occurrence of $e_k$ based on the subgraph $\mathcal{R}^k$. %the target event $e_k$ and subgraph $\mathcal{R}^k$.
We can maximize $I(Y_k ;\mathcal{R}^k)$ by minimizing $\mathcal{L}_{\textrm{cls}}$ using the predictor.

Moreover, we obtain the upper bound of $I \left(\mathcal{R}^k ; e_k, \mathcal{G}^k \right)$ in Equation~\ref{eq:tgib} as follows:
% \begin{small}
\begin{equation}\label{eq:compression}  
\begin{aligned}
\small
I(\mathcal{R}^k ; e_k, \mathcal{G}^k) 
% &= \mathbb{E}_{\mathcal{R}^k, \mathcal{G}^k}\left[ \log p \left( \mathcal{R}^k , \mathcal{G}^k \right) -  \log p \left( \mathcal{R}^k \right) -\log p \left( \mathcal{G}^k \right) \right] \\
&= \mathbb{E}_{\mathcal{R}^k, e_k, \mathcal{G}^k} \left[ \log p \left( \mathcal{R}^k | e_k, \mathcal{G}^k \right) -  \log p \left( \mathcal{R}^k \right) \right]\\
% &= \mathbb{E}_{\mathcal{R}^k, \mathcal{G}^k}\left[ \log p \left( \mathcal{R}^k |e_k, \mathcal{G}^k)-\log q(\mathcal{R}^k) \right] -  KL[p(\mathcal{R}^k) \| q( \mathcal{R}^k) ]\\
&\leq \mathbb{E}_{\mathcal{R}^k, e_k, \mathcal{G}^k}\left[ \log p \left( \mathcal{R}^k |e_k, \mathcal{G}^k \right) -  \log q \left( \mathcal{R}^k \right) \right]\\
&\leq \mathbb{E}_{e_k, \mathcal{G}^k} \left[ \textrm{KL} \left[ p \left( \mathcal{R}^k |e_k, \mathcal{G}^k \right) \| q \left( \mathcal{R}^k \right) \right]\right] \coloneqq \mathcal{L}_{\textrm{MI}},
\end{aligned}
\end{equation}
% \end{small}
where $q \left( \mathcal{R}^k \right)$ is the variational approximation of $p \left( \mathcal{R}^k \right)$ and  KL represents the Kullback-Leibler(KL) divergence. 
$q \left( \mathcal{R}^k \right)$ can be flexibly applied to various distributions, including normal distributions. We can minimize the upper bound of $I(\mathcal{R}^k ; e_k, \mathcal{G}^k)$ by minimizing $\mathcal{L}_{\textrm{MI}}$.

Finally, we obtain the final loss function for the graph information bottleneck as follows: 
$\mathcal{L}_{total} = \mathcal{L}_{\textrm{cls}} + \beta \mathcal{L}_{\textrm{MI}}$. 
% \begin{equation}\label{eq:Ltotal} 
% \vspace{-1ex}
% \small
% \mathcal{L}_{total} = \mathcal{L}_{\textrm{cls}} + \beta \mathcal{L}_{\textrm{MI}}. 
% \end{equation}

\subsection{Time-aware event representation} \label{subsec:ekej}
We generate time-aware event representations to capture the temporal information of events. Inspired by \cite{xu2020inductive}, we obtain node representations by performing self-attention based on temporal encoding.
We consider the neighboring nodes for node $z$ at time $t$ as $\mathcal{N}(z; t) = \left\{ z_{1}, z_{2},  \cdots ,  z_{n} \right\}$, where $n$ is the number of neighbors.
The interaction between $z$ and one of its neighbor $z_{i}$ has edge attribute $att_{z,i}\in\mathbb{R}^{f_\text{edge}}$ and occurs at time $t_{z,i}$, which is earlier than $t$.
We use the representations of neighbors, attributes of their interactions, and their temporal information as the input to the self-attention layer. 
We use $h_{z}^{(l)}(t)\in\mathbb{R}^d$ to denote the representation of node $z$ at time $t$ in the $l$-th layer, where $h_{z}^{(0)}(t)$ is the raw node feature of node $z$, denoted as  $x_{z}\in\mathbb{R}^{f_\text{node}}$, that is invariant over time $t$.
% , and each layer outputs $h_{z}^{(l)}(t)$ for node $z$ at time $t$.
For the self-attention mechanism, we define the query, key and value as:
% \footnote{To remove clutter, we omit the layer index $l$ from $Q,K$ and $V.$}:
% \begin{small}
% \begin{gather}\label{eq:qkv} 
% Q(t) = \left[ h_{u_i}^{(l-1)}(t)\|att_{i,0} \|\Phi_{d_T}(0) \right]  \nonumber \\ 
% \vspace{10pt}
% K(t) = V(t) 
% = \begin{bmatrix}
% V_1(t)\\
% %h_{v_{i,2}}^{(l-1)}(t_1)\|att_{i,2} \|\Phi_{d_T}(t-t_2) \\
% \vdots  \\
% V_N(t) \\ 
% \end{bmatrix}
% = \begin{bmatrix}
% h_{v_{i,1}}^{(l-1)}(t_1)\|att_{i,1} \|\Phi_{d_T}(t-t_1) \\
% %h_{v_{i,2}}^{(l-1)}(t_1)\|att_{i,2} \|\Phi_{d_T}(t-t_2) \\
% \vdots  \\
% h_{v_{i,N}}^{(l-1)}(t_N)\|att_{i,N} \|\Phi_{d_T}(t-t_N)\\ 
% \end{bmatrix},
% \end{gather}
% \end{small}
% \vspace{-2mm}
\begin{small}
\begin{align}\label{eq:qkv} 
Q^{(l)}(t) &= \left[ \; h_{z}^{(l-1)}(t)\|att_{z,0} \|\Phi_{d_T}(0) \; \right],  \nonumber \\ 
\vspace{10pt}
K^{(l)}(t)
&= \begin{bmatrix}
K^{(l)}_1(t)\\
%h_{v_{i,2}}^{(l-1)}(t_1)\|att_{i,2} \|\Phi_{d_T}(t-t_2) \\
\vdots  \\
K^{(l)}_n(t) \\ 
\end{bmatrix}
= \begin{bmatrix}
h_{z_1}^{(l-1)}(t_{z,1})\|att_{z,1} \|\Phi_{d_T}(t-t_{z,1}) \\
%h_{v_{i,2}}^{(l-1)}(t_1)\|att_{i,2} \|\Phi_{d_T}(t-t_2) \\
\vdots  \\
h_{z_n}^{(l-1)}(t_{z,n})\|att_{z,n} \|\Phi_{d_T}(t-t_{z,n})\\ 
\end{bmatrix}, \nonumber\\
V^{(l)}(t)
&= \begin{bmatrix}
V^{(l)}_1(t)\\
\vdots  \\
V^{(l)}_n(t) \\ 
\end{bmatrix}
= \begin{bmatrix}
h_{z_1}^{(l-1)}(t_{z,1})\|att_{z,1} \|\Phi_{d_T}(t-t_{z,1}) \\
%h_{v_{i,2}}^{(l-1)}(t_1)\|att_{i,2} \|\Phi_{d_T}(t-t_2) \\
\vdots  \\
h_{z_n}^{(l-1)}(t_{z,n})\|att_{z,n} \|\Phi_{d_T}(t-t_{z,n})\\ 
\end{bmatrix},
\end{align}
\end{small}
% \vspace{-3mm}

\noindent where $\Phi_{d_T}:\mathbb{R} \rightarrow \mathbb{R}^{d_T}$ is time encoding that provides a continuous functional mapping from the time domain to the vector space with $d_T$ dimensions. 
According to the translation-invariant assumption of time encoding, we use $\left\{ t-t_{z,1}, t-t_{z,2}, \cdots, t-t_{z,n} \right\}$ as the interaction times. 
This means that we only consider the time span since time encoding is designed to measure the temporal distance between nodes, which is more important than the absolute value of time.

\begin{figure*}[t]
  \centering
  \includegraphics[width=0.99\textwidth]{./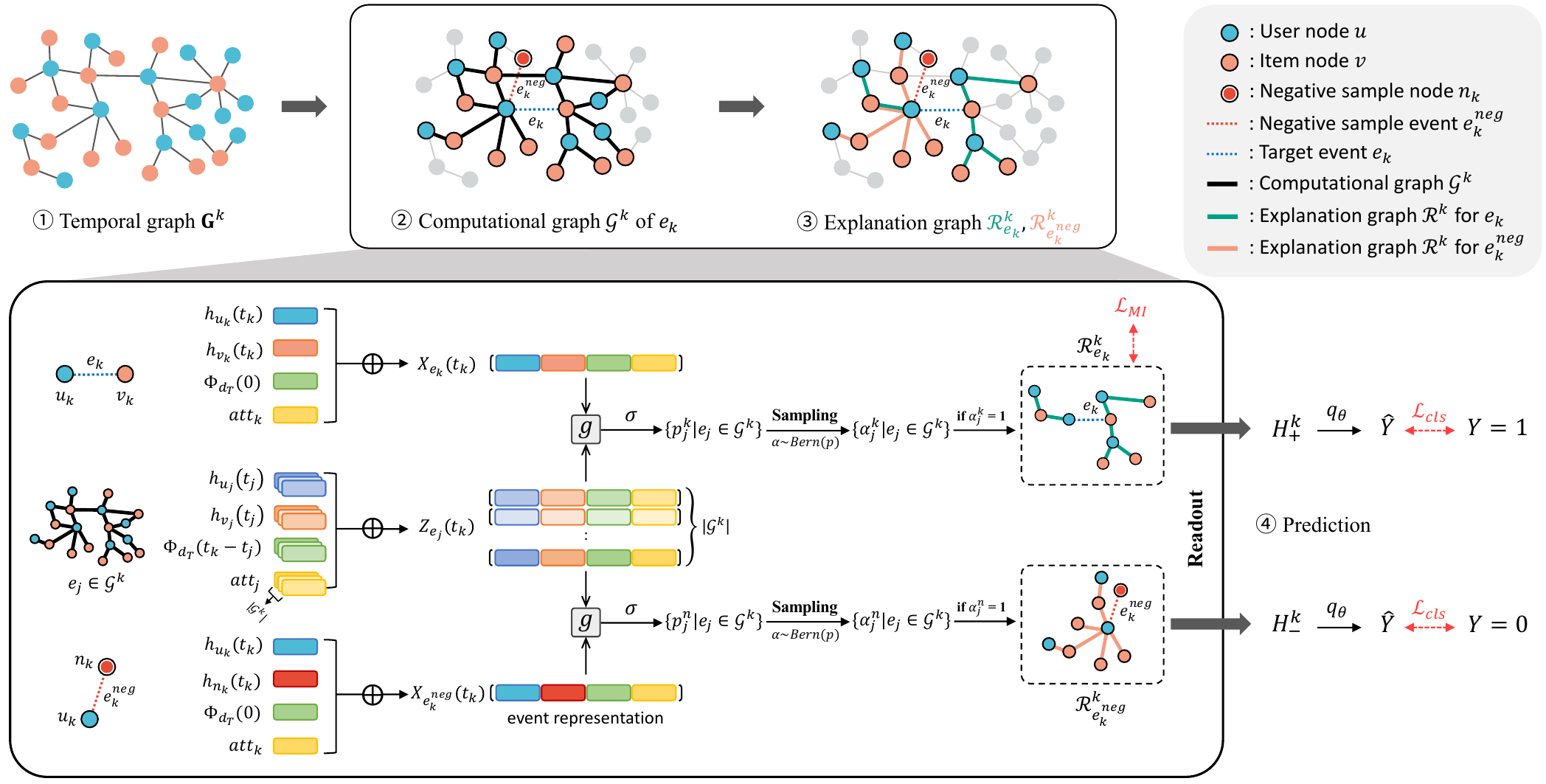}
  \vspace{-2mm}
  \caption{The architecture of our proposed TGIB.}  
  \label{fig:architecture}
\end{figure*}

Each node collects information from its neighboring nodes, and in this process, the attention weights ${\left\{\alpha^{(l)}_{i}(t)\right\}}_{i=1}^n $ are defined as:
\begin{equation}\label{eq:attweight}  
\small
\alpha^{(l)}_i (t) = \text{softmax}\left(\frac{ Q^{(l)}(t) K^{(l)}_i(t)^T }{\sqrt{(d+f_\text{edge}+d_T)}}\right)\in\mathbb{R}^n,
\end{equation}
% \begin{equation}\label{eq:attweight}  
% \small
% \alpha^{(l)}_i (t) = \frac{\exp \left( Q^{(l)}(t) K^{(l)}_i(t)^T \right)}{\sum_{q}\exp \left( Q^{(l)}(t) K^{(l)}_q(t)^T \right)}\in\mathbb{R}^n
% \end{equation}
where $K^{(l)}_i(t)$ is the $i$-th row of $K^{(l)}(t)$. The attention weight $\alpha_{i}^{(l)}(t)$ indicates the contribution of node $z_i$ to the features of node $z$ within the topological structure $\mathcal{N} \left( z; t \right)$, considering the interaction time $t$ with $z$ computed at layer $l$. 
We utilize self-attention to incorporate temporal interactions into node features and structural information.
The representation for a node $z_i$ within the neighborhood $\mathcal{N} \left( z; t \right)$ is calculated as $\alpha_i^{(l)}(t)\cdot V^{(l)}_i(t)$ based on the attention weight $\alpha^{(l)}_i(t)$.
Finally, we obtain the hidden neighborhood representations as follows\footnote{To remove clutter, we omit weights for query, key, and value, i.e., $W^Q, W^K$ $W^V$, that exist for each layer $l$.}:
\begin{equation}\label{eq:attn}
\small
\begin{split}
\tilde{h}_{z}^{(l)}(t) &= \textrm{Attn}\left(Q^{(l)}(t), K^{(l)}(t), V^{(l)}(t) \right) \\& =  \textrm{softmax}\left( \frac{Q^{(l)}(t) {K^{(l)}(t)}^T} {\sqrt{(d+f_\text{edge}+d_T)}}\right) V^{(l)}(t).
\end{split}
\end{equation}
We concatenate the neighborhood representation $\tilde{h}_{z}^{(l)}(t)$ with the feature of node $z$, i.e., $x_{z}$, and use it as the input to a feed-forward network as follows:
% \begin{small}
\begin{align}\label{eq:attn}  
\nonumber h_{z}^{(l)}(t) &= \textrm{FFN}\left(\left[\tilde{h}_{z}^{(l)}(t) \| x_z \right]\right)  \\
&= \textrm{ReLU} \left( \left[ \tilde{h}_{z}^{(l)}(t)\| x_z \right]W_0^{(l)}+b_0^{(l)}\right) W_1^{(l)}+b_1^{(l)},
\end{align}
% \end{small}
where  $h_{z}^{(l)}(t)$ is the time-aware node embedding for node $z$ at time $t$ as the output.

Finally, we construct the time-aware event representation for the target event $e_k=\left\{ u_k, v_k, t_k, \textrm{att}_k \right\} $ with the representation of the two nodes $u_k$ and $v_k$, time encoding and edge attribute $\textrm{att}_k$ as follows:
% We define the target event representation, denoted as $X_{e_k}$, which we aim to predict as follows :
% We define the target event representation, denoted as $X_{e_k}$, which we aim to predict and the candidate event representation, denoted as $Y_{e_j}$, which may potentially be included within the explanation graph $\mathcal{R}^k$ as follows: 
\begin{equation}\label{eq:e_k}  
% \small
X_{e_{k}}(t_k) = \left[  \; h_{u_{k}}(t_k) \; \| \; h_{v_{k}}(t_k) \; \|  \; \Phi_{d_{T}}(0)  \; \|  \; \textrm{att}_k  \; \right],
\end{equation}
where $X_{e_{k}}(t_k)$ denotes the time-aware representation for the target event $e_k$ at time $t_k$.
Additionally, we define the candidate event representation, denoted as $Z_{e_j}$, which may potentially be included within the explanation graph $\mathcal{R}^k$ as follows: 
\begin{equation}\label{eq:e_j}  
% \small
Z_{e_{j}}(t_k) = \left[  \; h_{u_{j}}(t_j)  \; \|  \; h_{v_{j}}(t_j)  \; \|  \; \Phi_{d_{T}}(t_k-t_j)  \; \|  \; \textrm{att}_j \right],
\end{equation}
where all $e_j$ satisfy the following condition : $e_j \in \mathcal{G}^k$, where $1 \leq j \leq k-1$.
In the time encoding $\Phi_{d_{T}}$ of the above equations, similar to node representation, we consider the time span to measure the temporal distance between each interaction.
It allows time-aware explanations to sufficiently incorporate temporal distance information and ensures that the importance scores of multiple events occurring between the same pair of nodes over time are dependent on the time span.
Therefore, we regard the interaction time as $t_k-t_j$, where $t_k$ is the occurrence time of the target event $e_k$, and $t_j$ is the occurrence time of the candidate event $e_j$. 
% It contributes to capturing the relationship between two timestamps, which leads to the derivation of time-aware explanations.

\subsection{Minimizing $I(\mathcal{R}^k ; e_k,  \mathcal{G}^k)$ in Equation~\ref{eq:compression}} \label{subsec:compression}

In this section, we propose a parameterized time-aware bottleneck to model $p(\mathcal{R}^k |e_k,  \mathcal{G}^k)$ given in Equation~\ref{eq:compression}. To alleviate computational difficulty, we decompose $(\mathcal{R}^k |e_k,  \mathcal{G}^k)$ into a multivariate Bernoulli distribution as follows:
\begin{equation}\label{eq:p(Rk|ekGk)}  
% \small
p(\mathcal{R}^k |e_k,  \mathcal{G}^k) = \prod_{e_j \in \mathcal{R}_k} p_j^k 
\ \ \cdot \prod_{e_j \in \mathcal{G}^k \textrm{\textbackslash} \mathcal{R}_k}(1-p_j^k), 
\end{equation}
where $p_j^k$ is the probability of $e_j$ given $e_k$ and $\mathcal{G}^k$, i.e., $p(e_j|e_k,  \mathcal{G}^k)$, following Bernoulli distribution.
We apply the Gumbel-Softmax technique ~\cite{jang2016categorical} for sampling in order to allow gradients to propagate from the classifier to the time-aware bottleneck module in optimization. 
Each $p_j^k$ is computed as the output of an MLP that takes the target event representation $X_{e_k}$ and the candidate event representation $Z_{e_j}$ as input as follows:
\begin{equation}\label{eq:pjk}  
% \small
p_j^k = p(e_j|e_k,  \mathcal{G}^k) = \sigma \left( g \left(X_{e_{k}}(t_k), \;  Z_{e_{j}}(t_k)  \right)\right)
\end{equation}
% \begin{align}\label{eq:pjk}  
% p_j^k &= p(e_j|e_k,  \mathcal{G}^k) = \sigma \left(z_k^T \cdot z_j \right), \\
% \textrm{where } z_k &= f\left( X_{e_{k}}(t_k) \right), \, z_j = f\left( Y_{e_{j}}(t_k)  \right) 
% \end{align}
% \begin{equation}\label{eq:pjk}  
% p_j^k = p(e_j|e_k,  \mathcal{G}^k) = f_p(X_{e_{k}}(t_k), Y_{e_{j}}(t_k)),
% \end{equation}
where $\sigma \left( \cdot \right)$ is the sigmoid function and $g$ is an MLP. %are MLP models that take target event representation and candidate event representation as input, respectively.
A large $p_j^k$ indicates that $e_j$ is important for predicting $e_k$ in $\mathcal{G}^k$.

Next, we define the variational approximation $q(\mathcal{R}^k)$ of the marginal distribution $p(\mathcal{R}^k)$. 
For every edge $e$ in the graph $\mathcal{G}^k$, we sample $\alpha_e' \sim Bern(r)$, where $r\in \left[ 0, 1 \right]$ is a predefined hyperparameter. 
We eliminate all edges in $\mathcal{G}^k$ and reinstate all edges with $\alpha_e'=1$.
We assume that the graph obtained in this process is $\mathcal{R}^k$.
Consequently, we use a multivariate Bernoulli distribution for $q(\mathcal{R}^k)$ as follows : %$q(\mathcal{R}^{k})=r^{|\mathcal{R}^{k}|} (1-r)^{|\mathcal{G}^{k}|-|\mathcal{R}^{k}|}$.
\begin{equation}\label{eq:q(Rk)}  
% \small
q(\mathcal{R}^{k})=r^{|\mathcal{R}^{k}|} (1-r)^{|\mathcal{G}^{k}|-|\mathcal{R}^{k}|}.
\end{equation}

Finally, the mutual information loss $\mathcal{L}_{\textrm{MI}}$ for the time-aware bottleneck in Equation~\ref{eq:compression} is calculated as follows:

% \begin{small}
\begin{equation}\label{eq:Lmi}  
\begin{aligned}
\mathcal{L}_{\textrm{MI}} &= \mathbb{E}_{ \mathcal{G}^k} \left[ KL \left[ p(\mathcal{R}^k | e_k, \mathcal{G}^k) \| q(\mathcal{R}^k) \right]\right] \\
&= \mathbb{E}_{p(e_k, \mathcal{G}^k)}\left[ \sum_{e_j\in\mathcal{G}^k} p_j^k \log \frac{p_j^k}{r} + (1-p_j^k) \log \frac{1-p_j^k}{1-r} \right].
\end{aligned}
\end{equation}
% \end{small}

\subsection{Minimizing $-I(Y_k ; \mathcal{R}^k)$ in Equation~\ref{eq:prediction}} \label{subsec:prediction}
The predictor $q_{\theta} \left( Y_k | \mathcal{R}^k \right)$ in Equation~\ref{eq:prediction} provides predictions $Y_k$ based on the bottleneck code $\mathcal{R}^k$. %target event and bottleneck code.
We sample stochastic weights from the Bernoulli distribution and obtain a valid event representation $\tilde{Z}_{e_j}(t_k)$ from the sampled $\alpha_j^k$ as follows:
\begin{equation}\label{eq:Ytilda}  
% \small
\tilde{Z}_{e_j}(t_k) = \alpha_j^k Z_{e_j}(t_k), \quad \alpha_j^k \sim Ber(p_j^k),
\end{equation}
where $Z_{e_j}(t_k)$ is the candidate event representation obtained from Equation~\ref{eq:e_j}.
% Based on the sampled $\alpha_j^k$, we can express the extracted $\mathcal{R}^k$ as follows:
% \begin{equation}\label{eq:Rk}  
% \small
% \mathcal{R}^k = \left\{ e_j | e_j \in \mathcal{G}^k,  \alpha_j^k=1 \right\}.
% \end{equation}

To ensure that gradients can be computed with respect to $p_j^k$, we use the Gumbel-Softmax reparameterization trick ~\cite{jang2016categorical}.
We obtain the representation of $\mathcal{R}^k$, denoted as ${H}^k_{+}$, extracted from each $\tilde{Z}_{e_j}(t_k)$ as follows:
\begin{equation}\label{eq:Rkembedding} 
% \small
{H}_{+}^k = \textrm{Readout} \left[  \left\{ \tilde{Z}_{e_j}(t_k) | e_j \in \mathcal{G}^k \right\} \right].
\end{equation}

% \begin{equation}\label{eq:Rkembedding} 
% \small
% {H}_{+}^k = \textrm{Readout} \left[  \left\{ \tilde{Z}_{e_1}(t_k), \tilde{Z}_{e_2}(t_k), \cdots , \tilde{Z}_{e_{| \mathcal{G}^k |}}(t_k) \right\} \right].
% \end{equation}

% \begin{equation}\label{eq:Rkembedding} 
% \small
% {H}_{+} (t_k) = \textrm{Readout} \left[  \left\{ {Z}_{e_j}(t_k) | e_j \in \mathcal{R}^k \right\} \right].
% \end{equation}

\noindent \textbf{Negative Sample.} 
We utilize negative samples to effectively train the predictor $q_{\theta} \left( Y_k | \mathcal{R}^k \right)$.
To generate a negative sample of $e_k=\left\{ u_k, v_k, t_k, \textrm{att}_k \right\} $, we fix the node $u_k$ and replace $v_k$ by randomly sampling a node from the entire graph $\mathbf{G}$.
% The negative sample includes the same $u_k$ representation as $X_{e_k}$, and the node connected to $u_k$ within the negative sample is randomly selected.
We denote the randomly sampled node as $n_k$, and the event representation $X_{e_k^\text{neg}}(t_k)$ corresponding to the negative sample $n_k$ is defined as follows:
\begin{equation}\label{eq:e_n}  
% \small
X_{e_k^\text{neg}}(t_k) = \left[  \; h_{u_{k}}(t_k) \; \| \; h_{n_{k}}(t_k) \; \|  \; \Phi_{d_{T}}(0)  \; \|  \; att_k  \; \right] 
\end{equation}

% \begin{equation}\label{eq:pjk}  
% n_j^q(t_k) = p(e_j|e^{-}_{q},\mathcal{G}^k) = f_p(Z_{e^{-}_{q}}(t_k), Y_{e_{j}}(t_k))
% \end

In the same manner as $X_{e_k}(t_k)$, we extract the bottleneck code based on $X_{e_k^\text{neg}}(t_k)$ along with a candidate event $Z_{e_j}(t_k)$, and sample stochastic weights from the Bernoulli distribution. 
%to extract the selected candidate event embedding from $Z_{e_j}(t_k)$.
Consequently, using the Bernoulli variables derived based on $X_{e_k^\text{neg}}(t_k)$, we select important events from candidate events and obtain the selected candidate graph embedding ${H}^k_{-}$ through a readout function.

Finally, we use the time-aware link prediction loss function as follows:
\begin{equation}\label{eq:Lcls} 
\small
\mathcal{L}_{\textrm{cls}} = \sum_{
% \substack{(u_k,v_k,t_k, att_k) \\ \in\mathcal{E}}
e_k \in S
} -\log \left[ \sigma\left( q_{\theta}( X_{e_k}, {H}^k_{+} ) \right)\right]
 - N\cdot\mathbb{E}_{\text{neg} \sim P_n} \log \left[ \sigma\left( 
  q_{\theta} ( X_{e_k^\text{neg}}, {H}^k_{-} ) \right) \right], 
\end{equation}
where $q_{\theta}$ is an MLP,
% , and the summations are over all observed edges $e_k$ and the predefined number of candidate events $j$. 
$N$ is the number of negative samples, and $P_n$ is a distribution from which negative samples are sampled.
% is the negative sampling distribution in the node space.
\vspace{2mm}

\noindent \textbf{Explanability.} The explainability of TGIB is established by injecting stochasticity into past candidate events. $\mathcal{L}_{\textrm{MI}}$ in Equation ~\ref{eq:Lmi} aims to assign high stochasticity to all candidate events, while $\mathcal{L}_{\textrm{cls}}$ in Equation ~\ref{eq:Lcls} simultaneously learns to reduce the stochasticity for explanation graphs that are important for the occurrence of $e_k$. 
We generate an importance score $p_j^k$ derived from the interactions of events at different timestamps, i.e., $e_k$ and $e_j$, which allows the bottleneck code to help generate a time-aware explanation.
% It captures the relationship between two timestamps, which leads to the derivation of time-aware explanations.
TGIB can rank all candidate events according to $p_j^k$ and detect the top-ranked candidate events as explanation graphs.

\subsection{Spurious Correlation Removal}
\label{subsec:inductivebias}
\begin{figure}[H] %%% t: top, b: bottom, h: here
\begin{center}
\includegraphics[width=0.85\linewidth]{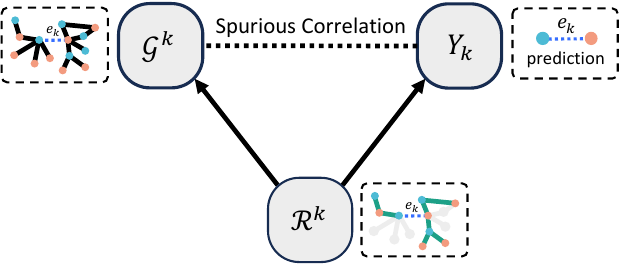}
\end{center}
\vspace{-3mm}
\caption{
Spurious Correlation Removal of TGIB.}
\label{fig:inductivebias}
\vspace{-1.8ex}
\end{figure}

TGIB eliminates spurious correlations within the input data and ensures interpretability. If there is a correspondence between the explanation of event occurrence $\mathcal{R}^{k*}$  and the label Y, we can prove that $\mathcal{R}^{k*}$  is the optimal solution for the objective function of TGIB (i.e., Equation~\ref{eq:tgib}).

\vspace{1mm} 
\begin{prop} \label{prop:tgib}
Assume that each $\mathcal{G}^k$ contains $\mathcal{R}^{k*}$, which determines $Y_k$. In other words, for some deterministic invertable function $f$ with randomness $\epsilon$ that is independent of $\mathcal{G}^k$, it satisfies $Y = f(\mathcal{R}^{k*}) + \epsilon$. %, where  $\mathcal{R}^k \in \textrm{Subgraph}(\mathcal{G}^k)$.
Then,  for any $\beta \in [0,1]$, the optimal $\mathcal{R}^{k}$ that minimizes $-I(Y_k;\mathcal{R}^k)+\beta I(\mathcal{R}^k;e_k,\mathcal{G}^k)$ is $\mathcal{R}^{k*}$.
\end{prop}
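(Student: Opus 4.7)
The plan is to bound the TGIB objective in Equation~\ref{eq:tgib} from below by a quantity that does not depend on the bottleneck $\mathcal{R}^k$, and then verify that $\mathcal{R}^k=\mathcal{R}^{k*}$ attains this lower bound. The two key ingredients are the data processing inequality (DPI) applied to the Markov chain $Y_k-(e_k,\mathcal{G}^k)-\mathcal{R}^k$, which holds because the Bernoulli sampling described in Section~\ref{subsec:compression} produces $\mathcal{R}^k$ from $(e_k,\mathcal{G}^k)$ without ever seeing $Y_k$, together with the sufficient-statistic structure implied by $Y_k=f(\mathcal{R}^{k*})+\epsilon$ with $f$ invertible and $\epsilon$ independent of $\mathcal{G}^k$.

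I would first apply DPI in both directions along this chain: on the compression term this yields $I(\mathcal{R}^k;e_k,\mathcal{G}^k)\ge I(\mathcal{R}^k;Y_k)$, while on the prediction term it yields $I(Y_k;\mathcal{R}^k)\le I(Y_k;\mathcal{G}^k)$. The additive-noise assumption makes $\mathcal{R}^{k*}$ a sufficient statistic for $Y_k$ inside $\mathcal{G}^k$, so $I(Y_k;\mathcal{G}^k)=I(Y_k;\mathcal{R}^{k*})$; combined with the prediction-term DPI, $I(Y_k;\mathcal{R}^k)\le I(Y_k;\mathcal{R}^{k*})$ for every admissible $\mathcal{R}^k$. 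Substituting the compression-term DPI into Equation~\ref{eq:tgib} and exploiting $\beta\in[0,1]$,
\begin{equation*}
-I(Y_k;\mathcal{R}^k)+\beta I(\mathcal{R}^k;e_k,\mathcal{G}^k)\;\ge\;-(1-\beta)\,I(Y_k;\mathcal{R}^k)\;\ge\;-(1-\beta)\,I(Y_k;\mathcal{R}^{k*}),
\end{equation*}
so the right-hand side is a lower bound that does not depend on the choice of $\mathcal{R}^k$.

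Finally, I would verify that $\mathcal{R}^{k*}$ saturates both inequalities. Because $\mathcal{R}^{k*}$ is a deterministic sub-object of $\mathcal{G}^k$, the compression term collapses to $I(\mathcal{R}^{k*};e_k,\mathcal{G}^k)=H(\mathcal{R}^{k*})$; invertibility of $f$, interpreted as in prior GIB work~\cite{yu2020graph,yu2022improving,seo2023interpretable} so that $\mathcal{R}^{k*}$ is recoverable from $Y_k$ with $H(\mathcal{R}^{k*}\mid Y_k)=0$, then gives $H(\mathcal{R}^{k*})=I(\mathcal{R}^{k*};Y_k)$. Plugging back in, the objective at $\mathcal{R}^k=\mathcal{R}^{k*}$ collapses to exactly $-(1-\beta)\,I(Y_k;\mathcal{R}^{k*})$, matching the lower bound. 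The hard part is precisely this tightness step: the proposition hinges on reading ``$f$ invertible with independent noise $\epsilon$'' strongly enough to force $H(\mathcal{R}^{k*}\mid Y_k)=0$, without which a residual gap of $\beta\,H(\mathcal{R}^{k*}\mid Y_k)$ would survive and $\mathcal{R}^{k*}$ would cease to be a minimizer; adopting this convention closes the argument.
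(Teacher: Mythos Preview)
Your argument is correct and reaches the same minimum value as the paper, but the route is genuinely different from what the paper does. The paper never invokes the data processing inequality; instead it performs an exact chain-rule decomposition and rewrites the objective as
\[
-I(Y_k;\mathcal{R}^k)+\beta I(\mathcal{R}^k;e_k,\mathcal{G}^k)
=(1-\beta)\,I(Y_k;\mathcal{G}^k\mid\mathcal{R}^k)+\beta\,I(\mathcal{R}^k;e_k,\mathcal{G}^k\mid Y_k)+C,
\]
with $C$ a constant independent of $\mathcal{R}^k$ (in fact $C=-(1-\beta)I(Y_k;\mathcal{G}^k)=-(1-\beta)I(Y_k;\mathcal{R}^{k*})$, exactly your lower bound). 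Since both conditional mutual informations are nonnegative, optimality follows once one checks that both vanish at $\mathcal{R}^{k*}$. Your proof instead chains two DPI inequalities to reach the same floor $-(1-\beta)I(Y_k;\mathcal{R}^{k*})$ and then verifies tightness directly. The paper's identity-based rewriting has the advantage of giving an \emph{exact} expression for the optimality gap (the two conditional MI terms), whereas your DPI route is shorter and makes the Markov-chain assumptions explicit.

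On the tightness step, you and the paper are making the same leap, just phrased differently: your condition $H(\mathcal{R}^{k*}\mid Y_k)=0$ is equivalent to the paper's $I(\mathcal{R}^{k*};e_k,\mathcal{G}^k\mid Y_k)=0$ (since $\mathcal{R}^{k*}$ is determined by $\mathcal{G}^k$, one has $H(\mathcal{R}^{k*}\mid Y_k)=I(\mathcal{R}^{k*};e_k,\mathcal{G}^k\mid Y_k)$). The paper simply asserts this from ``$\mathcal{R}^{k*}=f^{-1}(Y-\epsilon)$ with $\epsilon$ independent of $\mathcal{G}^k$'', while you are more candid that this step requires reading the invertibility assumption strongly. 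One small imprecision in your write-up: the DPI bound $I(Y_k;\mathcal{R}^k)\le I(Y_k;\mathcal{G}^k)$ really follows from $\mathcal{R}^k\subseteq\mathcal{G}^k$ (so $(\mathcal{G}^k,\mathcal{R}^k)$ carries no more than $\mathcal{G}^k$), not from the Markov chain $Y_k-(e_k,\mathcal{G}^k)-\mathcal{R}^k$, which would only give $I(Y_k;\mathcal{R}^k)\le I(Y_k;e_k,\mathcal{G}^k)$; the paper uses the same subgraph observation in its derivation.
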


\begin{proof}
We can obtain the following derivation:

\begin{footnotesize}
\begin{align*}\label{eq:Lc} 
& \quad \min_{\mathcal{R}_k} {-I(Y_k;\mathcal{R}^k) +\beta I(\mathcal{R}^k;e_k,\mathcal{G}^k)} \\
&=\min_{\mathcal{R}_k} {-I(Y_k;\mathcal{R}^k) - \beta I(Y_k;e_k,\mathcal{G}^k |  \mathcal{R}^k) + \beta I(Y_k,\mathcal{R}^k ; e_k,\mathcal{G}^k)}\\
&=\min_{\mathcal{R}_k} {-I(Y_k;\mathcal{R}^k) - \beta I(Y_k;e_k|\mathcal{G}^k, \mathcal{R}^k) - \beta I(Y_k; \mathcal{G}^k|\mathcal{R}^k) + \beta I(Y_k,\mathcal{R}^k ; e_k,\mathcal{G}^k)}\\
&=\min_{\mathcal{R}_k} {-I(Y_k;\mathcal{G}^k,\mathcal{R}^k) +I(Y_k ;\mathcal{G}^k|\mathcal{R}^k) - \beta I(Y_k; \mathcal{G}^k|\mathcal{R}^k) + \beta I(Y_k,\mathcal{R}^k ; e_k,\mathcal{G}^k)}\\
&=\min_{\mathcal{R}_k} {I(Y_k;\mathcal{G}^k|\mathcal{R}^k) - \beta I(Y_k; \mathcal{G}^k|\mathcal{R}^k) + \beta I(\mathcal{R}^k;e_k,\mathcal{G}^k|Y_k)+\beta I(Y_k ; e_k,\mathcal{G}^k)}\\
&=\min_{\mathcal{R}_k} { (1-\beta)I(Y_k ; \mathcal{G}^k|\mathcal{R}^k) + \beta I(\mathcal{R}^k;e_k,\mathcal{G}^k|Y_k)},
\end{align*}
\end{footnotesize}
where, since $\mathcal{R}^k$ is a subgraph of $\mathcal{G}^k$, implying $(\mathcal{G}^k, \mathcal{R}^k)$ holds no additional information over  $\mathcal{G}^k$, it follows $-I(Y_k; e_k|\mathcal{G}^k,\mathcal{R}^k)=-I(Y_k; e_k|\mathcal{G}^k)$ in the third equation and $-I(Y_k;\mathcal{G}^k,\mathcal{R}^k)=-I(Y_k;\mathcal{G}^k)$ in the fourth equation. 
The derivation of the equations is based on the chain rule of mutual information, and since the objective is to find the $\mathcal{R}^k$ that optimizes the equation, the terms that are irrelevant to $\mathcal{R}^k$ have been removed (detailed derivation is provided in Appendix~\ref{apx:prop1}).

The above derivation shows that $\mathcal{R}^k$ that minimizes $(1-\beta)$ $I(Y_k ; \mathcal{G}^k|\mathcal{R}^k) + \beta I(\mathcal{R}^k;e_k,\mathcal{G}^k|Y_k)$ can also minimize $-I(Y_k;\mathcal{R}^k)+\beta I(\mathcal{R}^k;e_k,\mathcal{G}^k)$.
Since mutual information is always non-negative, $(1-\beta)I(Y_k ; \mathcal{G}^k|\mathcal{R}^k) + \beta I(\mathcal{R}^k;e_k,\mathcal{G}^k|Y_k)$ reaches its minimum when both $I(Y_k ; \mathcal{G}^k|\mathcal{R}^k)$ and $I(\mathcal{R}^k;e_k,\mathcal{G}^k|Y_k)$ are equal to $0$.

$Y = f(\mathcal{R}^{k*}) + \epsilon$, where $\epsilon$ is independent from $\mathcal{G}^k$, implies that  $I(Y_k ; \mathcal{G}^k|\mathcal{R}^{k*}) = 0 $. Similarly, $\mathcal{R}^{k*} = f^{-1}(Y-\epsilon)$ where $\epsilon$ is independent from $\mathcal{G}^k$ implies that  $I(\mathcal{R}^{k*};e_k,\mathcal{G}^k|Y_k) = 0$. Therefore, the following holds: $(1-\beta)I(Y_k ; \mathcal{G}^k|\mathcal{R}^{k*}) + \beta I(\mathcal{R}^{k*};e_k,\mathcal{G}^k|Y_k)=0$. In other words, the optimal $\mathcal{R}^{k}$ that minimizes $-I(Y_k;\mathcal{R}^k)+\beta I(\mathcal{R}^k;e_k,\mathcal{G}^k)$ is $\mathcal{R}^{k*}$.
\end{proof}

As shown in the Figure~\ref{fig:inductivebias}, $\mathcal{G}^k$ and $Y$ may have a spurious correlation due to the factors excluding $\mathcal{R}^{k*}$ from $\mathcal{G}^k$.
In other words, the correlation between $\mathcal{G}^k \setminus \mathcal{R}^{k*}$ and the label $Y$ is a spurious correlation and not a real factor in determining the label.
Predicting $Y$ based on $\mathcal{G}^k$ has a risk of capturing spurious correlations, which can lead to a decrease in the model's generalization performance. 
According to Proposition~\ref{prop:tgib}, this problem can be solved by optimizing the objective function of TGIB.
It shows that event occurrence can be predicted based on $R^{k*}$ without spurious correlations.
As a result, TGIB can improve the generalization performance of predictions by removing spurious correlations for event occurrence.

\section{Experiments} \label{sec:experiments}
 
In this section, we provide experimental setups (Sec~\ref{sec:experimental_setup}), performances of link prediction (Sec~\ref{subsec:pred_performance}), explanation performances  (Sec~\ref{subsec:expl_performance}), explanation visualization (Sec~\ref{subsec:visualization}) and ablation study of TGIB (Sec~\ref{subsec:ablation}).

\subsection{Experimental Setup} \label{sec:experimental_setup}

\subsubsection*{\textbf{Datasets}}
We use six real-world temporal graph datasets to measure the performance of TGIB. The six datasets include various applications and domains, such as social networks and communication networks. The following describes the details of the datasets used for evaluation.
Data statistics are provided in the Table~\ref{tab:datastat}.

\begin{table}[H]
\centering
\caption{Statistics of datasets used for experiments.}
\vspace{-3mm}
\label{tab:datastat}
\resizebox{\columnwidth}{!}{
\begin{tabular}{c|ccccc}
\toprule[1pt]
Dataset   & Domain   & \#Nodes & \#Edges & \#Edge Features & Duration \\ \hline
Wikipedia & Social   & 9,227   & 157,474 & 172             & 1 month  \\
UCI       & Social   & 1,899   & 58,835  & -               & 196 days \\
USLegis   & Politics & 225     & 60,396  & 1               & 12 terms \\
CanParl   & Politics & 734     & 74,478  & 1               & 14 years \\
Enron     & Social   & 184     & 125,235 & -               & 3 years  \\
Reddit    & Social   & 10,984  & 672,447 & 172             & 1 month  \\ \bottomrule[1pt]
\end{tabular}
}
\end{table}

\begin{itemize}[leftmargin=2mm]
    \item \textbf{Wikipedia} \cite{kumar2019predicting} consists of 9,227 nodes representing editors and Wiki pages, and 157,474 edges representing timestamped post requests. Each edge has a Linguistic Inquiry and Word Count (LIWC) feature vector of the requested text~\cite{pennebaker2001linguistic} with each vector having a length of 172.
    \item \textbf{UCI} \cite{panzarasa2009patterns} is a social network within the online community of University of California, Irvine students spanning 196 days. Nodes represent students, and edges denote messages exchanged between two students, each with a timestamp in seconds.
    \item \textbf{USLegis} \cite{huang2020laplacian} is a co-sponsorship network among U.S. senators. Each node represents legislators, and two legislators are connected if they have jointly sponsored a bill. The weight assigned to each edge represents the number of times two legislators have jointly sponsored bills over a period of 12 terms.
    \item \textbf{CanParl}  \cite{huang2020laplacian} is a network capturing interactions among Canadian Members of Parliament from 2006 to 2019. Nodes represent Members of Parliament, and two members are connected if they both vote in favor of a specific bill. The weight assigned to each edge represents the number of times one member has voted in favor of another member within one year.
    \item \textbf{Enron} \cite{shettyenron} is an email network that includes communication exchanged over a period of three years within the Enron energy company. Nodes represent employees of the Enron company, and edges represent the exchanged emails between two employees.
    \item \textbf{Reddit} \cite{kumar2019predicting} represents a network of posts created by users in subreddits for one month. The nodes represent users and posts, and the edges represent timestamped post requests. Similar to Wikipedia, the features of the edges have LIWC feature vectors of the requested text~\cite{pennebaker2001linguistic}, with each vector having a length of 172.
    
\end{itemize}
% \vspace{-2mm}

% \begin{table}[t]
% \centering
% \caption{Statistics of datasets used for experiments.}
% \vspace{-3mm}
% \label{tab:datastat}
% % \renewcommand{\arraystretch}{1.2}
% \resizebox{\columnwidth}{!}{
% \begin{tabular}{c|ccccc}
% \toprule[1pt]
% Dataset   & Domain   & \#Nodes & \#Edges & \#Edge Features & Duration \\ \hline
% Wikipedia & Social   & 9,227   & 157,474 & 172             & 1 month  \\
% UCI       & Social   & 1,899   & 58,835  & -               & 196 days \\
% USLegis   & Politics & 225     & 60,396  & 1               & 12 terms \\
% CanParl   & Politics & 734     & 74,478  & 1               & 14 years \\
% Enron     & Social   & 184     & 125,235 & -               & 3 years  \\
% Reddit    & Social   & 10,984  & 672,447 & 172             & 1 month  \\ \bottomrule[1pt]
% \end{tabular}
% }
% \end{table}

\subsubsection*{\textbf{Evalutation Protocol}}
We split the total time $\left[ 0, T \right] $ into $\left[ 0, T_{\textrm{train}} \right]$, $\left[T_{\textrm{train}}, T_{\textrm{val}}\right]$, and $\left[T_{\textrm{val}}, T_{\textrm{test}}\right]$ and use the events occurring within each interval as the training set, validation set, and test set, respectively.
We set $T_{\textrm{train}}=0.7T$ and $T_{\textrm{val}}=0.85T$.
 We set the number of layers to 2 and the dropout rate as 0.1 for the aggregation process within the attention mechanism to obtain time-aware event representations.
 Our model is trained for $10$ epochs using the Adam SGD optimizer with a learning rate of 0.00001.
We set the dimensions of the node embeddings and time encodings to be identical to the raw features of the events.
For the $L$-hop computational graph, we set $L$ as 2.
% Our model is trained for $10$ epochs using the Adam SGD optimizer with a learning rate of 0.00001. 
We evaluate the performance, which is averaged over $5$ independent runs with different random seeds.

\subsection{Link Prediction}  \label{subsec:pred_performance}
\subsubsection*{\textbf{Baselines}}
We use seven TGNN methods, i.e., \textbf{Jodie} \cite{kumar2019predicting}, \textbf{DyRep} \cite{trivedi2019dyrep}, \textbf{TGAT} \cite{xu2020inductive}, \textbf{TGN}  \cite{rossi2020temporal}, \textbf{TCL} \cite{wang2021tcl}, \textbf{CAW-N} \cite{wang2021inductive}, \textbf{GraphMixer} \cite{cong2023we} as baselines.
Details on compared baselines can be found in Appendix~\ref{apx:baseline}.

\subsubsection*{\textbf{Setup}}
To measure performance in various settings, we assess link prediction performance in both transductive and inductive settings.

\begin{itemize}[leftmargin=4mm]
\item \textbf{Transductive Setting.} We use all events occurring in the three intervals (i.e., $\left[ 0, T_{\textrm{train}} \right]$, $\left[T_{\textrm{train}}, T_{\textrm{val}}\right]$, and $\left[T_{\textrm{val}}, T_{\textrm{test}}\right]$) as the training set, validation set, and test set, respectively. This means that during the training time, all events occurring before $T_{\textrm{train}}$ can be observed.  
\item \textbf{Inductive Setting.}  We predict the occurrence of events involving nodes not observed during the training time. Specifically, \textbf{1)} we split the training set, validation set, and test set as shown in Section~\ref{sec:experimental_setup}. \textbf{2)} We randomly select $10\%$ of nodes from the training set, and remove all events containing these nodes from the training set. \textbf{3)} We include the events involving these nodes only in the validation and test sets.  
\end{itemize}

\begin{table*}[t]
\centering
\caption{AP on link prediction in a transductive setting for TGIB and 8 baseline methods over 6 datasets.}
\label{table:transductive}
\setlength{\tabcolsep}{5pt}
\vspace{-3mm}
\resizebox{0.75\textwidth}{!}{
\begin{tabular}{llcccccc}
\cmidrule[1pt]{2-8}
% \multirow{10}{*}{\rotatebox[origin=c]{90}{\bf Transductive}} & \bf Model      & \bf Wikipedia   & \bf UCI         & \bf USLegis    &\bf  CanParl    & \bf Enron       & \bf Reddit      \\ \cline{2-8} 
\multirow{10}{*}{\rotatebox[origin=c]{90}{\bf Transductive}} & \bf Model      & \bf Wikipedia   & \bf UCI         & \bf USLegis    &\bf  CanParl    & \bf Enron       & \bf Reddit      \\ \cline{2-8} 
                               & Jodie      & 94.62 ± 0.50   & 86.73 ± 1.00   & 73.31 ± 0.40  & 69.26 ± 0.31 & 77.31 ± 4.20   & 97.11 ± 0.30   \\
                               & DyRep      & 92.43 ± 0.37  & 53.67 ± 2.10 & 57.28 ± 0.71 & 54.02 ± 0.76 & 74.55 ± 3.95  & 96.09 ± 0.11  \\
                               & TGAT       & 95.34 ± 0.10   & 73.01 ± 0.60 & 68.89 ± 1.30  & 70.73 ± 0.72 & 68.02 ± 0.10 & 98.12 ± 0.20 \\
                               & TGN        & 97.58 ± 0.20 & 80.40 ± 1.40 & \underline{75.13 ± 1.30}  & 70.88 ± 2.34 & 79.91 ± 1.30 & \underline{98.30 ± 0.20} \\
                               & TCL        & 96.47 ± 0.16  & 89.57 ± 1.63  & 69.59 ± 0.48 & 68.67 ± 2.67 & 79.70 ± 0.71  & 97.53 ± 0.02  \\
                               & CAW-N        & \underline{98.28 ± 0.20} & 90.03 ± 0.40 & 69.94 ± 0.40  & 69.82 ± 2.34 & \textbf{89.56 ± 0.09}  & 97.95 ± 0.20 \\
                               % & EdgeBank   & 94.41 ± 0.02  & 84.93 ± 0.10  & 54.20 ± 0.08 & 54.27 ± 0.12 &           & 95.78 ± 0.02  \\
                               & GraphMixer & 97.25 ± 0.03  & \underline{93.25 ± 0.57}  & 70.74 ± 1.02 & \underline{77.04 ± 0.46} & 82.25 ± 0.16  & 97.31 ± 0.01  \\ \cmidrule{2-8} 
                               % \rowcolor{navy}
                               % & TGIB       & \bf 99.37 ± 0.09       &   \textbf{93.60 ± 0.24}     & \bf 91.61 ± 0.34     & \bf 87.07 ± 0.44      &  \underline{82.42 ± 0.11}     & \bf 99.68 ± 0.15       \\ \cmidrule[1pt]{2-8} 
                               & \cellcolor{grey}TGIB       & \cellcolor{grey}\bf 99.37 ± 0.09       &   \cellcolor{grey}\textbf{93.60 ± 0.24}     & \cellcolor{grey}\bf 91.61 ± 0.34     & \cellcolor{grey}\bf 87.07 ± 0.44      &  \cellcolor{grey}\underline{82.42 ± 0.11}     & \cellcolor{grey}\bf 99.68 ± 0.15       \\ \cmidrule[1pt]{2-8} 
 \end{tabular}}
\end{table*}
\begin{table*}[]
\centering
\caption{AP on link prediction in an inductive setting for TGIB and 8 baseline methods over 6 datasets.}
\label{table:inductive}
\setlength{\tabcolsep}{5pt}
\vspace{-4mm}
\resizebox{0.75\textwidth}{!}{
\begin{tabular}{llcccccc}
% \begin{tabular}{ll@{\hspace{1cm}}c@{\hspace{0.5cm}}c@{\hspace{0.5cm}}c@{\hspace{0.5cm}}c@{\hspace{0.5cm}}c@{\hspace{0.5cm}}c}
\cmidrule[1pt]{2-8}
\multirow{10}{*}{\rotatebox[origin=c]{90}{\bf Inductive}} & \bf Model      & \bf Wikipedia   & \bf UCI         & \bf USLegis    &\bf  CanParl    & \bf Enron       & \bf Reddit      \\ \cline{2-8} 

                               & Jodie      & 93.11 ± 0.40 & 71.23 ± 0.80   & 52.16 ± 0.50 & 53.92 ± 0.94 & 76.48 ± 3.50   & 94.36 ± 1.10   \\
                               & DyRep      & 92.05 ± 0.30  & 50.43 ± 1.20 & 56.26 ± 2.00 & 54.02 ± 0.76 & 66.97 ± 3.80  & 95.68 ± 0.20  \\
                               & TGAT       &  93.82 ± 0.30   & 66.89 ± 0.40 & 52.31 ± 1.50 & 55.18 ± 0.79 & 63.70 ± 0.20& 96.42 ± 0.30 \\
                               & TGN        & 97.05 ± 0.20 & 74.70 ± 0.90 & \underline{58.63 ± 0.37}  & 54.10 ± 0.93 & 77.94 ± 1.02 & 96.87 ± 0.20 \\
                               & TCL        & 96.22 ± 0.17  & 87.36 ± 2.03 & 52.59 ± 0.97 & 54.30 ± 0.66 & 76.14 ± 0.79  & 94.09 ± 0.07  \\
                               & CAW-N        & \underline{97.70 ± 0.20} & 89.65 ± 0.40 & 53.11 ± 0.40  & 55.80 ± 0.69 & \textbf{86.35 ± 0.51}  & \underline{97.37 ± 0.30} \\
                               % & EdgeBank   & 92.90 ± 0.46  & 81.17 ± 0.80  & 49.51 ± 0.12 & 54.36 ± 3.25 &           & 93.18 ± 0.54  \\
                               & GraphMixer & 96.65 ± 0.02  & \underline{91.19 ± 0.42}  & 50.71 ± 0.76 & \underline{55.91 ± 0.82} & 75.88 ± 0.48  & 95.26 ± 0.02  \\ \cmidrule{2-8} 
                               % & TGIB       & \bf 99.28 ± 0.11      &   \textbf{91.26 ± 0.16}    & \bf 86.42 ± 0.16     & \bf 79.56 ± 0.79     &     \underline{80.64 ± 0.59}   & \bf 99.54 ± 0.02       \\ \cmidrule[1pt]{2-8} 
                               & \cellcolor{grey}TGIB       & \cellcolor{grey}\bf 99.28 ± 0.11      &   \cellcolor{grey}\textbf{91.26 ± 0.16}    & \cellcolor{grey}\bf 86.42 ± 0.16     & \cellcolor{grey}\bf 79.56 ± 0.79     &     \cellcolor{grey}\underline{80.64 ± 0.59}   & \cellcolor{grey}\bf 99.54 ± 0.02       \\ \cmidrule[1pt]{2-8} 
\end{tabular}}
\vspace{-3mm}
\end{table*}

\subsubsection*{\textbf{Experiment Results}}

The experimental results for link prediction in the transductive setting and inductive setting are presented in Table~\ref{table:transductive} and Table~\ref{table:inductive}, respectively.
We measured the mean and standard deviation of Average Precision (AP) on the test set.
% In the tables, TGIB represents our proposed method/.
We obtained the following observations: \textbf{1)} TGIB demonstrated the highest performance on 5 out of 6 datasets compared to the baselines for the temporal graphs in both transductive and inductive settings.
The remaining dataset (i.e., Enron) showed the second-highest performance compared to the baselines. We attribute the superior performance of TGIB to its capturing of important past events, which eliminates spurious correlations for an event occurrence $e_k$ by predicting $Y_k$ based on $\mathcal{R}^k$ as stated in Proposition~\ref{prop:tgib}. 
% \textbf{2)} It demonstrates that TGIB can significantly contribute to improving the prediction performance on the occurrence of target events by integrating important past events. This shows that TGIB can eliminate spurious correlations for event occurrence by predicting $Y$ based on $\mathcal{R}^k$ as stated in Proposition~\ref{prop:tgib}. 
\textbf{2)} TGIB achieved high prediction performance in politics networks such as USLegis and CanParl compared to the baselines. Specifically, in the inductive setting, USLegis achieved a $47.4\%$ performance improvement over the runner-up baseline, and CanParl achieved a $42.3\%$  improvement over the runner-up baseline. 
% In a political network, specific individuals or groups can have significant influence, and the GIB method effectively identifies these important people or groups and analyzes their roles and influence. 
These findings suggest that in a political network, specific individuals or groups can have significant influence, and the GIB method effectively identifies these important people or groups and analyzes their roles and influence.
\textbf{3)} Additionally, TGIB achieved very high AP scores of $99.28\%$ and $99.68\%$ in the Wikipedia and Reddit datasets, respectively. Even though Wikipedia and Reddit already have high baseline performances, with runner-up performances of $98.28\%$ and $98.30\%$ respectively, TGIB achieves nearly perfect performance on both datasets.  TGIB is the only model that demonstrates performance exceeding $99\%$ on both datasets.

\begin{table*}[]
\centering
\caption{Explanation performance for TGIB and 6 baseline methods over 5 datasets.}
\label{table:explanation}
\setlength{\tabcolsep}{8pt}
\vspace{-3mm}
\resizebox{0.75\textwidth}{!}{
% \begin{tabular}{l@{\hspace{0.6cm}}c@{\hspace{0.6cm}}c@{\hspace{0.6cm}}c@{\hspace{0.6cm}}c@{\hspace{0.6cm}}c@{\hspace{0.6cm}}c}
\begin{tabular}{lcccccc}
\toprule
%               & \textbf{Wikipedia}  & \textbf{UCI}        & \textbf{USLegis}    & \textbf{CanParl}    & \textbf{Enron}      & \textbf{Reddit} \\ \hline
% Random        & 70.91 ± 1.03          & 54.51 ± 0.52          & 54.24 ± 1.34          & 51.66 ± 2.26          & 48.94 ± 1.28          & 81.97 ± 0.92      \\
% ATTN          & 77.31 ± 0.01          & 27.25 ± 0.01          & 62.24 ± 0.00          & 79.92 ± 0.01          & 68.28 ± 0.01          & 86.80 ± 0.01      \\
% Grad-CAM      & 83.11 ± 0.01          & 26.06 ± 0.01          & 78.98 ± 0.01          & 50.42 ± 0.01          & 19.93 ± 0.01          & 90.29 ± 0.01      \\
% GNNExplainer  & 84.34 ± 0.16          & 62.38 ± 0.46          & 89.42 ± 0.50          & 80.59 ± 0.58          & 77.82 ± 0.88          & 89.44 ± 0.56      \\
% PGExplainer   & 84.26 ± 0.78          & 59.47 ± 1.68          & 91.42 ± 0.94          & 75.92 ± 1.12          & 62.37 ± 3.82          & 92.31 ± 0.92      \\
% T-GNNExplainer & 85.74 ± 0.56          & 68.26 ± 2.62          & 90.37 ± 0.84          & 80.67 ± 1.49          & 82.02 ± 1.94          & 95.73 ± 0.36      \\ \hline
% TGIB          & \textbf{88.09 ± 0.68} & \textbf{87.06 ± 1.04} & \textbf{93.33 ± 0.72} & \textbf{89.72 ± 1.18} & \textbf{83.55 ± 0.91} &                 \\ \hline

              & \textbf{Wikipedia}  & \textbf{UCI}        & \textbf{USLegis}    & \textbf{CanParl}    & \textbf{Enron}       \\ \hline
Random        & 70.91 ± 1.03          & 54.51 ± 0.52          & 54.24 ± 1.34          & 51.66 ± 2.26          & 48.94 ± 1.28                \\
ATTN          & 77.31 ± 0.01          & 27.25 ± 0.01          & 62.24 ± 0.00          & 79.92 ± 0.01          & 68.28 ± 0.01                \\
Grad-CAM      & 83.11 ± 0.01          & 26.06 ± 0.01          & 78.98 ± 0.01          & 50.42 ± 0.01          & 19.93 ± 0.01                \\
GNNExplainer  & 84.34 ± 0.16          & 62.38 ± 0.46          & 89.42 ± 0.50          & 80.59 ± 0.58          & 77.82 ± 0.88                \\
PGExplainer   & 84.26 ± 0.78          & 59.47 ± 1.68          & \underline{91.42 ± 0.94}          & 75.92 ± 1.12          & 62.37 ± 3.82               \\
T-GNNExplainer & \underline{85.74 ± 0.56}          & \underline{68.26 ± 2.62}          & 90.37 ± 0.84          & \underline{80.67 ± 1.49}          & \underline{82.02 ± 1.94}               \\ \midrule
% TGIB          & \textbf{88.09 ± 0.68} & \textbf{87.06 ± 1.04} & \textbf{93.33 ± 0.72} & \textbf{89.72 ± 1.18} & \textbf{83.55 ± 0.91}                  \\ \bottomrule
\cellcolor{grey}TGIB          & \cellcolor{grey}\textbf{88.09 ± 0.68} & \cellcolor{grey}\textbf{87.06 ± 1.04} & \cellcolor{grey}\textbf{93.33 ± 0.72} & \cellcolor{grey}\textbf{89.72 ± 1.18} & \cellcolor{grey}\textbf{83.55 ± 0.91}                  \\ \bottomrule
\end{tabular}}
\vspace{-1.5ex}
\end{table*}

\begin{figure*}[t]
  \centering
  \includegraphics[width=0.996\textwidth]{./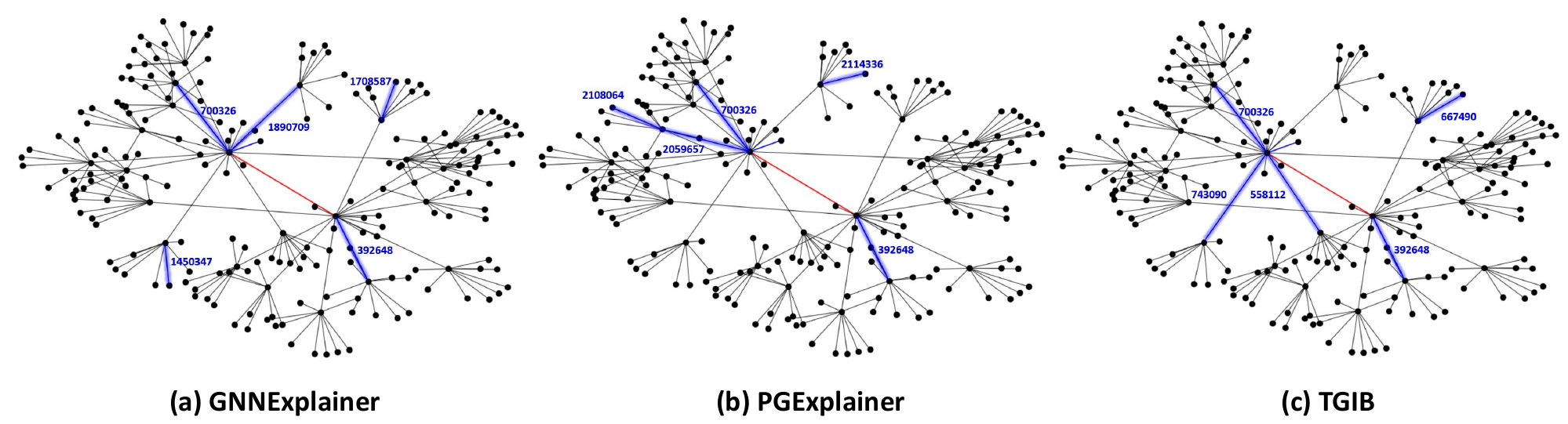}
  \vspace{-4mm}
  \caption{Comparison of explanation visualization for explanation models for static graphs and TGIB.}  
  \label{fig:visualization}
\end{figure*}

\subsection{Explanation Performance} \label{subsec:expl_performance}
\subsubsection*{\textbf{Baselines}}
We use six explanation methods, i.e. \textbf{Random}, \textbf{ATTN} \cite{velivckovic2017graph}, \textbf{Grad-CAM} \cite{pope2019explainability}, \textbf{GNNExplainer} \cite{ying2019gnnexplainer}, \textbf{PGExplainer}  \cite{luo2020parameterized}, \textbf{T-GNNExplainer} \cite{xia2022explaining} as baselines.
\textbf{Random} is the explanation results obtained by randomly sampling a number of nodes that satisfy sparsity. %from $\mathcal{G}^k$.
Details on baselines are presented in Appendix~\ref{apx:baseline}.
  
\subsubsection*{\textbf{Setup}}
We used TGAT~\cite{xu2020inductive} as the base model for all baselines that generate explanations in a post-hoc manner. 
Since the overall range of Fidelity used as an evaluation metric in TGNNExplainer has variability depending on the dataset or base model, calculating the area of the graph related to Fidelity may not provide consistent results.
Therefore, to evaluate the performance of explanations, we measure the proportion of predictions that match the model's original predictions when the model performs predictions based on explanations.  
% we measure the proportion of predictions based on explanations that match the model's original predictions. 
In other words, predictions based on superior explanations have the same prediction label as predictions from the original graph. 
We also evaluate the explanation performance over various sparsity levels~\cite{xia2022explaining}, where the sparsity is defined as $|\mathcal{R}^k|/|\mathcal{G}^k|$.
Since higher levels of sparsity would always yield better explanation performance, we evaluate explanation performance in various sparsity environments.
We divide the sparsity level from 0 to 0.3 with intervals of 0.002, measure the performance of the explanation graph $\mathcal{R}^k$, and then calculate the area under the sparsity-accuracy curve. 
\subsubsection*{\textbf{Experiment Results.}}
We have presented the experimental results for explanation performance in Table~\ref{table:explanation}.
We have the following observations: \textbf{1)} TGIB outperforms all the baselines, indicating that it provides a high quality of explanation for the predictions. 
\textbf{2)} TGIB achieved up to 27.5\% improvement in performance on the UCI dataset compared to the baselines. Moreover, 2 out of 5 baselines (i.e., ATTN and Grad-CAM) show worse performance than random explanations. 
\textbf{3)} Some baselines, such as ATTN and Grad-CAM, fail to provide explanation, performing even worse than random explanations depending on the datasets, which significantly affect the reliability of the explainable model. However, TGIB demonstrates stable performance in explainablity, and its performance is further supported by a theoretical background. 
Furthermore, we present the inference time in Appendix~\ref{apx:efficiency}, demonstrating the efficiency of our method's evaluation while maintaining its superior explanations.

\vspace{-1mm}

\subsection{Explanation Visualization}  \label{subsec:visualization}
% \subsubsection*{\textbf{Qualitative Results.}}
We compare the explanation visualizations for static graph explanation models, GNNExplainer and PGExplainer, with the temporal graph explanation model, TGIB, as shown in the Figure~\ref{fig:visualization}. In the figure, the target events are depicted by red solid lines, and the explanation events are depicted by blue solid lines. 
Additionally, we marked the difference in occurrence timestamps between the target event and each of the five explanation events.
Then, the mean time intervals are calculated as follows: (a) 1228523.4, (b) 1475006.2, and (c) 612333.2. This shows that the time interval for (c) is smaller than that for (a) and (b), and that the timestamps of the explanation events in TGIB are closer to the timestamps of the target events than those in GNNExplainer and PGExplainer.
% It shows that the timestamps of the explanation events in TGIB are closer to the timestamps of the target events than those in GNNExplainer and PGExplainer. 
In temporal graphs, events that occurred recently should have a greater influence on the target event compared to events that happened a long time ago; however, GNNExplainer and PGExplainer fail to capture this. In other words, it shows that GNNExplainer and PGExplainer cannot be easily generalized to temporal graphs because they don't capture temporal dynamics.
In contrast, TGIB considers the temporal information of the target event to generate explanations for temporal graphs.
Therefore, we can observe that TGIB is capable of capturing temporal dependencies along with graph topology.

\subsection{Ablation Study} \label{subsec:ablation}
% \vspace{-3mm}
\begin{table}[H]
\centering
\caption{Ablation study of the proposed components.}
\label{table:ablation}
\setlength{\tabcolsep}{6pt}
\vspace{-2mm}
\resizebox{0.98\columnwidth}{!}{
\begin{tabular}{lcccccc}
% \begin{tabular}{l@{\hspace{0.6cm}}c@{\hspace{0.6cm}}c@{\hspace{0.6cm}}c@{\hspace{0.6cm}}c@{\hspace{0.6cm}}c@{\hspace{0.6cm}}c}
\toprule[1pt]
              & \textbf{USLegis}  & \textbf{UCI}        & \textbf{CanParl}    \\ \hline
w/o $X_{e_{k}}(t_k)$ \text{and} $Z_{e_{j}}(t_k)$                                        & 85.85 ± 0.36          & 86.10 ± 0.74          & 85.53 ± 0.62    \\
w/o $I(Y_k ; \mathcal{R}^k)$                      & 81.24 ± 0.41          & 78.30 ± 0.83          & 69.85 ± 0.74    \\
w/o $I(\mathcal{R}^k ; e_k,  \mathcal{G}^k)$      & 88.91 ± 1.14          & 90.45 ± 0.38          & 86.78 ± 0.37    \\ \midrule
% with all                                          & \textbf{91.61 ± 0.34}          & \textbf{91.41 ± 0.41}          & \textbf{87.07 ± 0.44}    \\ \bottomrule[1pt]
\cellcolor{grey} \texttt{with all} (TGIB)                                          & \cellcolor{grey}\textbf{91.61 ± 0.34}          & \cellcolor{grey}\textbf{91.41 ± 0.41}          & \cellcolor{grey}\textbf{87.07 ± 0.44}    \\ \bottomrule[1pt]
\end{tabular}}
% \vspace{-3ex}
\end{table}

We conduct ablation studies to investigate the efficiency of the proposed model (i.e., TGIB).
Table~\ref{table:ablation} provides the ablation study of our proposed method based on link prediction. 
The \texttt{with all} setting indicates our final model including all components of TGIB.
We performed ablation studies on time-aware event representations (i.e., $X_{e_{k}}(t_k)$ and $ Z_{e_{j}}(t_k)$) and losses relevant to mutual information (i.e., $I(Y_k;\mathcal{R}^k)$ and $I(\mathcal{R}^k;e_k,\mathcal{G}^k)$).
We obtain the following observations: \textbf{1)} Capturing temporal information from event representations in the prediction based on the IB principle contributes to performance improvement. \textbf{2)} The model experiences declines in performance when the terms relevant to mutual information, $I(Y_k;\mathcal{R}^k)$ and $I(\mathcal{R}^k;e_k,\mathcal{G}^k)$, are not considered. Specifically, the removal of $I(Y_k;\mathcal{R}^k)$ results in $\mathcal{R}^k$ containing a large amount of label-irrelevant information, leading to difficulties in the final label prediction. Moreover, when $I(\mathcal{G}^k; e_k, \mathcal{R}^k)$ is not considered, it leads to a decrease in the generalization performance of the prediction due to the portions of $\mathcal{G}^k$ that have spurious correlations with $Y_k$, as mentioned in Section~\ref{subsec:inductivebias}.

\section{Conclusion} \label{conclusionsection}
In this work, we propose TGIB, a more reliable and practical explanation model for temporal graphs that can simultaneously perform prediction and explanation tasks.
The main idea is to provide time-aware explanations for the occurrence of the target events by restricting the flow of information from candidate events to predictions based on the IB theory.
We demonstrate that TGIB exhibits significant performance in both prediction and explanation across various datasets, and its explanation visualizations show that it can capture temporal relationships to extract important past events. 
Consequently, TGIB represents a more adaptable explainable model compared to existing models, capable of being efficiently trained on dynamically evolving graph settings without the exhaustive requirement for retraining from scratch, unlike other methods.
% This is the first work to provide explanations along with predictions for temporal graphs in an end-to-end manner.

\noindent \subsubsection*{\textbf{Acknowledgement.}}
This work was supported by the National Research Foundation of Korea(NRF) grant funded by the Korea government(MSIT) (RS-2024-00335098), Institute of Information \& communications Technology Planning \& Evaluation (IITP) grant funded by the Korea government(MSIT) (No.2022-0-00157), and National Research Foundation of Korea(NRF) funded by Ministry of Science and ICT (NRF-2022M3J6A1063021).

\clearpage
%%
%% The next two lines define the bibliography style to be used, and
%% the bibliography file.
\bibliographystyle{ACM-Reference-Format}
\balance
\bibliography{TGIB}

\clearpage
\twocolumn
\appendix
% \section{Ethic Statement}

% In line with ACM's ethical standards, we affirm, to the best of our knowledge, that this paper is free of ethical concerns. All datasets employed in our research are publicly accessible and have been used with integrity.

\section{Proof of Proposition~\ref{prop:tgib}} \label{apx:prop1}
\vspace{3mm}
In this section, we provide a detailed derivation process of equations corresponding to part of the proof of Proposition~\ref{prop:tgib}.

\noindent From Equation~\ref{eq:tgib}, the TGIB objective is 
\begin{equation} \label{eq:proof1}
 -I(Y_k;\mathcal{R}^k)+\beta I(\mathcal{R}^k;e_k,\mathcal{G}^k). 
\end{equation}
According to the chain rule of mutual information, we can decompose the term $\beta I(\mathcal{R}^k;e_k,\mathcal{G}^k)$ as follows:
\begin{equation} \label{eq:proof2}
-I(Y_k;\mathcal{R}^k) - \beta I(Y_k;e_k,\mathcal{G}^k |  \mathcal{R}^k) + \beta I(Y_k,\mathcal{R}^k ; e_k,\mathcal{G}^k).
\end{equation}

\noindent  Similarly, we decompose the term $-\beta I(Y_k;e_k,\mathcal{G}^k |  \mathcal{R}^k)$ from Equation~\ref{eq:proof2} as follows:

\begin{equation} \label{eq:proof3}
-I(Y_k;\mathcal{R}^k) - \beta I(Y_k;e_k|\mathcal{G}^k, \mathcal{R}^k) - \beta I(Y_k; \mathcal{G}^k|\mathcal{R}^k) + \beta I(Y_k,\mathcal{R}^k ; e_k,\mathcal{G}^k).
\end{equation}

\noindent  Since $\mathcal{R}^k$ is a subgraph of $\mathcal{G}^k$, $(\mathcal{G}^k, \mathcal{R}^k)$ holds no additional information over  $\mathcal{G}^k$. Therefore, Equation~\ref{eq:proof3} can be expressed as follows:

\begin{equation} \label{eq:proof4}
-I(Y_k;\mathcal{R}^k) - \beta I(Y_k;e_k|\mathcal{G}^k) - \beta I(Y_k; \mathcal{G}^k|\mathcal{R}^k) + \beta I(Y_k,\mathcal{R}^k ; e_k,\mathcal{G}^k).
\end{equation}

\noindent  We also follow the same process for the term $-I(Y_k;\mathcal{R}^k)$ as in Equation~\ref{eq:proof3} and \ref{eq:proof4}, as follows:

\begin{align}\label{eq:proof5} 
&-I(Y_k;\mathcal{G}^k,\mathcal{R}^k) +I(Y_k ;\mathcal{G}^k|\mathcal{R}^k)- \beta I(Y_k;e_k|\mathcal{G}^k) \nonumber \\ 
& \hspace{38mm} - \beta I(Y_k; \mathcal{G}^k|\mathcal{R}^k) + \beta I(Y_k,\mathcal{R}^k ; e_k,\mathcal{G}^k) \nonumber\\
&=-I(Y_k;\mathcal{G}^k) +I(Y_k ;\mathcal{G}^k|\mathcal{R}^k)- \beta I(Y_k;e_k|\mathcal{G}^k) \nonumber \\ 
& \hspace{38mm} - \beta I(Y_k; \mathcal{G}^k|\mathcal{R}^k) + \beta I(Y_k,\mathcal{R}^k ; e_k,\mathcal{G}^k). 
\end{align}

\noindent  Additionally, we decompose the term $I(Y_k,\mathcal{R}^k ; e_k,\mathcal{G}^k)$ according to the chain rule of mutual information as:

\begin{align}\label{eq:proof5} 
& -I(Y_k;\mathcal{G}^k) +I(Y_k;\mathcal{G}^k|\mathcal{R}^k)- \beta I(Y_k;e_k|\mathcal{G}^k) - \beta I(Y_k; \mathcal{G}^k|\mathcal{R}^k) \nonumber \\ 
& \hspace{40mm} + \beta I(\mathcal{R}^k;e_k,\mathcal{G}^k|Y_k)+\beta I(Y_k ; e_k,\mathcal{G}^k).
\end{align}

\noindent  We separate the terms related to $\mathcal{R}^k$ from those unrelated to $\mathcal{R}^k$ as follows:
\begin{align}\label{eq:proof6} 
& (1-\beta)I(Y_k ; \mathcal{G}^k|\mathcal{R}^k) + \beta I(\mathcal{R}^k;e_k,\mathcal{G}^k|Y_k) -I(Y_k;\mathcal{G}^k) \nonumber \\
& \hspace{40mm} + \beta I(Y_k;e_k,\mathcal{G}^k) - \beta I(Y_k;e_k|\mathcal{G}^k).
\end{align}

\noindent  We can substitute the terms unrelated to $\mathcal{R}^k$ with a constant $C$, since they remain constant regardless of $\mathcal{R}^k$ as:
\begin{equation} \label{eq:proof7}
(1-\beta)I(Y_k ; \mathcal{G}^k|\mathcal{R}^k) + \beta I(\mathcal{R}^k;e_k,\mathcal{G}^k|Y_k) + C.
\end{equation}

\noindent Therefore, we can utilize Equation~\ref{eq:proof7} to find $\mathcal{R}^k$ that minimizes Equation~\ref{eq:proof1} .

\section{Notations} \label{apx:notation}

In this section, we summarize the main notations used in this paper. Table~\ref{table:notations} provides the main notation and their descriptions.
\begin{table}[H]
\centering
\caption{Summary of the notations.}
\vspace{-3mm}
\label{table:notations}
\resizebox{1.07\columnwidth}{!}{
\begin{tabular}{ll}
\toprule[1pt]

\textbf{Notation}              & \textbf{Description}  \\\hline
$S=\{e_1, e_2, \cdots\}$          & Series of continuous events          \\
$t$          & Timestamp           \\
$\mathbf{G}=(\mathbf{V}, \mathbf{E})$          & Temporal graph with nodes $\mathbf{V}$ and edges $\mathbf{E}$           \\
% $\mathbf{E}$          & Edges with timestamps          \\
% $\mathbf{V}$          & Nodes included in $\mathbf{E}$           \\
$f$          & Self-explainable temporal graph model           \\
$u, v$          & User and item node, respectively          \\
$e_i=(u_i,v_i,t_i,{att}_i)$        &  Event $e_i$ between node $u_i$ and $v_i$ at time $t_i$ with attribute ${att}_i$         \\
$\mathbf{G}^k$          & Graph constructed immediately before the timestamp $t_k$           \\
$\mathcal{G}^k$          & L-hop computation graph of $e_k$         \\
$\mathcal{R}^k$          & Subgraph of $\mathcal{G}^k$ considered as important events (i.e., explanation).         \\
${Y}$      & Ground truth label          \\
$\hat{Y}$      & Label prediction          \\
$I(\cdot , \cdot)$  & Mutual information function          \\
$\mathcal{N}(z; t)=\{z_1,\cdots,z_n\}$   &  Neighboring nodes for node $z$ at time $t$, where $n$ is \# of neighbors         \\
$att_{z,i}$ & Attribute of an interaction between $z$ and $z_i$       \\ 
$h_z^{(l)}(t)$          & Representation of node $z$ at time $t$ in the $l$-th layer \\ 
$d$          & Dimension of the node representation \\ 
$x_z$          & Raw feature of node $z$ \\ 
$f_{\text{node}}$          & Dimension of the raw node feature \\ 
$f_{\text{edge}}$          & Dimension of the raw edge feature \\ 
$\Phi_{d_T}$          & Time encoding function \\ 
$d_T$          & Output dimension of a function $\Phi_{d_T}$ \\ 
$e_k$          & Target event \\ 
$e_j$          & Candidate event \\ 
$e_k^{neg}$          & Negative sample event for $e_k$ \\ 
$n_k$          & Negative sample node for $e_k$ \\ 
$X_{e'}(t_k)$          & Time-aware representation for the target event $e'$ at time $t_k$ \\ 
$Z_{e_{j}}(t_k)$          & Time-aware representation for the candidate event $e_j$ at time $t_k$ \\ 
$\tilde{Z}_{e_{j}}(t_k)$          & Valid event representation for the candidate event $e_j$ at time $t_k$ \\ 
$p'_j$          & Probability of $e_j$ given $e'$, respectively \\ 
$g$          & MLP function calculating the probability $p'_j$ \\ 
$\alpha_e' \sim Bern(r)$          & Mask for $e$ sampled from $Bernoulli(r)$ \\ 
$H^k$          & Representation of the explanation graph $\mathcal{R}^k$ \\ 
$q_\theta$          & Link predictor \\ 
\bottomrule[1pt]
\end{tabular}}
\end{table}

\section{Efficiency Evaluation} \label{apx:efficiency}

\begin{table}[H]
\centering
\caption{Inference time of one explanation.}
\vspace{-3mm}
\label{table:inference_time}
\begin{tabular}{lcc}
\toprule[1pt]
              & \textbf{Wikipedia} & \textbf{Reddit} \\ \hline
TGAT + ATTN          & 0.02               & 0.04            \\
TGAT + Grad-CAM      & 0.04               & 0.06            \\
TGAT + GNNExplainer  & 8.52               & 11.03           \\
TGAT + PGExplainer   & 0.10               & 0.11            \\
TGAT + TGNNExplainer &  25.36             &   81.57       \\ \midrule
\rowcolor{grey}TGIB          & 0.11               &  0.53               \\ \bottomrule[1pt]
\end{tabular}
\end{table}

In this section, we measure the inference time to produce an explanation to investigate the effectiveness of TGIB. 
Table~\ref{table:inference_time} shows the inference time for several explanation models on Wikipedia and Reddit datasets.
All baselines are models that generate explanations in a post-hoc manner, and TGAT is used as their base model. The inference time is calculated for all test events. We obtain the following observations: \textbf{1)} TGNNexplainer is time-consuming due to its reliance on the MCTS algorithm. On the other hand, TGIB efficiently detects important candidate events based on the IB principle by injecting stochasticity into past candidate events to generate explanations.
\textbf{2)} Some explanation models for static graphs have fast inference times, but according to Table~\ref{table:explanation}, they exhibit low-quality explanations for temporal graphs. However, although TGIB is slower than them, it does not demonstrate a significant time difference with them and achieve notable explanation performance.
Therefore, considering both the quality of explanations and the inference time, TGIB is regarded as a reasonable model.

\section{Algorithm} \label{apx:alg}
\vspace{-2.5ex}

\begin{algorithm}[h]
        \small
        \DontPrintSemicolon
        \SetAlgoLined
    \caption{Overview of TGIB training}
    \label{alg:algorithm2}
    \KwInput{
    Temporal graph $S=\{e_1,e_2,\cdots\}$ where $e_i=\{u_i,v_i,t_i,{att}_i\}$, The number of epochs $T$
    }
    \textsf{Training set} $S_{train}\leftarrow\{e_i=\{u_i,v_i,t_i,{att}_i\}\mid t_i<0.7T\}$ \\
    \For{\textsf{epoch} \textbf{in} \textsf{1,2,$\cdots$,T}}
    {
        \For{$k$ \textbf{in} \textsf{1,2,$\cdots$,} $\lvert S_{train}\rvert$}
        {
            \tcc{Calculate an event representation}
            $X_{e_k}=[h_{u_k}(t_k)\mathbin\Vert h_{v_k}(t_k)\mathbin\Vert \Phi_{d_T}(0)\mathbin\Vert \textrm{att}_k]$ \\
            \tcc{Calculate a negative event representation}
            $n_k \leftarrow \text{Sample random } v_i \text{ from } S_{train} \text{ with } v_i \neq v_k$ \\
            $X_{e_k^{neg}}=[h_{u_k}(t_k)\mathbin\Vert h_{n_k}(t_k)\mathbin\Vert \Phi_{d_T}(0)\mathbin\Vert \textrm{att}_k ]$ \\
            \tcc{Extract an L-hop computational graph}
            $\mathcal{G}^k \leftarrow \{e_j \mid e_j \text{ is L-hop from } e_k\}$ \\
            \For{$e_j$ \textbf{in} $\mathcal{G}^k$} 
            {
                \tcc{Calculate a candidate event representation}
                $Z_{e_{j}}(t_k) = [  h_{u_{j}}(t_j) \mathbin\Vert  h_{v_{j}}(t_j)  \mathbin\Vert \Phi_{d_{T}}(t_k-t_j)  \mathbin\Vert \textrm{att}_j ]$ \\
                \tcc{Calculate a probability of $e_j$ given $e_k$}
                $p_j^k = p(e_j\mid e_k,  \mathcal{G}^k) = \sigma ( g (X_{e_{k}}(t_k),   Z_{e_{j}}(t_k)  ))$ \\
                \tcc{Evaluate the mutual information loss}
                $\mathcal{L}_{\textrm{MI}} = \mathbb{E}_{p(e_k, \mathcal{G}^k)}[ \sum_{e_j\in\mathcal{G}^k} p_j^k \log \frac{p_j^k}{r} + (1-p_j^k) \log \frac{1-p_j^k}{1-r} ]$ \\
                \tcc{Extract a valid event representation and $\mathcal{R}^k$}
                $\tilde{Z}_{e_j}(t_k) = \alpha_j^k Z_{e_j}(t_k), \quad \alpha_j^k \sim Ber(p_j^k)$ \\
                $\mathcal{R}^k \leftarrow e_j \text{ if } \alpha_j^k = 1$\\
            }
        \tcc{Calculate an representation of $\mathcal{R}^k$}
        ${H}_{+}^k = \textrm{Readout} [  \{ \tilde{Z}_{e_j}(t_k) \mid e_j \in \mathcal{G}^k \} ]$ \\
        Derive ${H}_{-}^k$ based on $X_{e_k^{neg}}$ in the same way as $X_{e_k}$ \\
        \tcc{Evaluate the link prediction loss}
        $\mathcal{L}_{\textrm{cls}} = \sum_{e_k \in S} -\log [ \sigma( q_{\theta}( X_{e_k}, {H}^k_{+} ) )] - N\cdot\mathbb{E}_{\text{neg} \sim P_n} \log [ \sigma(q_{\theta} ( X_{e_k^\text{neg}}, {H}^k_{-} ) ) ]$ \\
        \tcc{Calculate a total loss and update the model}
        $\mathcal{L}=\mathcal{L}_\textrm{cls} + \mathcal{L}_\textrm{MI}$ \\
        Update model parameters by gradient descent
        }
    }  
\end{algorithm}
% \vspace{-4ex}

\section{Baseline Details \& Source code} \label{apx:baseline}
Details on compared baselines for each experiments are provided below (Sec~\ref{apx:baseline:link},~\ref{apx:baseline:expl}). 
Official source codes for the baselines are also provided in table~\ref{tbl:source}.

\subsection{Link Prediction} \label{apx:baseline:link}
\begin{itemize}[leftmargin=4mm]
  \item \textbf{Jodie} \cite{kumar2019predicting} utilizes two coupled RNNs to produce temporal user and item embeddings then projects temporal user embeddings at a future time to predict the future user-item interaction.
  \item \textbf{DyRep} \cite{trivedi2019dyrep} is an RNN-based method that propagates messages along the interaction, where messages are collected from neighbors of nodes involved in the interaction. 
  \item \textbf{TGAT} \cite{xu2020inductive} applies Self-Attention Mechanism (SAM) to model both spatial and temporal relationships concurrently by using features with functional time encodings.
  \item \textbf{TGN}  \cite{rossi2020temporal} uses both RNN-based model and SAM-based model architecture, where the former is used to update node memory for modeling temporal dependencies and the latter is used to compute node embeddings modeling both spatial and temporal information similar to TGAT.
  \item \textbf{TCL} \cite{wang2021tcl} is a SAM-based method for modeling both spatial and temporal dependencies, where contrastive objective function is used to optimize encoders.
  \item \textbf{CAW-N} \cite{wang2021inductive} utilizes Causal Anonymous Walks (CAWs) extracted from temporal random walks to represent temporal graph dynamics with anonymized node identities and the RNN for encoding CAWs.
  \item \textbf{GraphMixer} \cite{cong2023we} uses three modules, 1) MLP-based link encoder, 2) node encoder using neighbor mean-pooling and 3) MLP-based link classifier without using commonly utilized RNN or SAM-based architecture in temporal modeling.
\end{itemize}

% \vspace{-2ex}

\subsection{Explanation Performance} \label{apx:baseline:expl}
\begin{itemize}[leftmargin=4mm]
    \item \textbf{ATTN} \cite{velivckovic2017graph} utilizes learned attention weights of GAT as edge importances to make explanations.
    \item \textbf{Grad-CAM} \cite{pope2019explainability} provides post-hoc explanation for the GNN by using importance score computed with gradients of the logit value with respect to the node embeddings.
    \item \textbf{GNNExplainer} \cite{ying2019gnnexplainer} is a post-hoc explanation model for providing explanations for GNN predictions. Specifically, this model learns to mask the input graph while including label information in the detected subgraphs.
    \item \textbf{PGExplainer}  \cite{luo2020parameterized} uses parameterized edge distributions obtained by explanation network to make explanation subgraph. The explanation network is optimized by maximizing the mutual information between the explanatory subgraph and predictions of the GNN.
    % \item \textbf{T-GNNExplainer} \cite{xia2022explaining} aims to make explanations on TGNNs in a post-hoc manner by utilizing the both the navigator-explorer architecture. The pretrained navigator is utilized to capture the inductive relationships among events and the explorer is used for finding the optimal combination of candidates as an explanation.
    \item \textbf{T-GNNExplainer} \cite{xia2022explaining} aims to explain TGNNs post-hoc, employing both navigator and explorer components. The pretrained navigator captures inductive relationships among events, and the explorer seeks the optimal combination of candidates for explanation.
\end{itemize}
% \vspace{-3mm}

\begin{table}[H]
\caption{Source code links of the baseline methods}
\vspace{-3mm}
\label{tbl:source}
\resizebox{\columnwidth}{!}
{
{
\renewcommand{\arraystretch}{1.0}
\begin{tabular}{c|p{7cm}} \toprule[1pt]
\textbf{Methods} & \textbf{Source code} \\ \hline
Jodie \cite{kumar2019predicting}       & \url{https://github.com/claws-lab/jodie}           \\ \hline
DyRep \cite{trivedi2019dyrep}       & \url{ https://github.com/hunto/DyRep}           \\ \hline
TGAT \cite{xu2020inductive}       & \url{https://github.com/StatsDLMathsRecomSys/Inductive-representation-learning-on-temporal-graphs}           \\ \hline
TGN \cite{rossi2020temporal}      & \url{https://github.com/twitter-research/tgn}           \\ \hline
% TCL \cite{wang2021tcl}      & \textcolor{blue}{\url{To be added}}           \\ \hline
CAW-N \cite{wang2021inductive}      & \url{https://github.com/snap-stanford/CAW}           \\ \hline
GraphMixer \cite{cong2023we}      & \url{https://github.com/CongWeilin/GraphMixer}  \\ \hline
% RALE \cite{rale}      & \url{https://github.com/shuaiOKshuai/RALE}  \\ \hline
% ATTN \cite{velivckovic2017graph}      & \textcolor{blue}{\url{To be added}}  \\ \hline
Grad-CAM \cite{pope2019explainability}      & \url{https://github.com/ppope/explain_graphs}  \\ \hline
GNNExplainer \cite{ying2019gnnexplainer}      & \url{https://github.com/RexYing/gnn-model-explainer}  \\ \hline
PGExplainer \cite{luo2020parameterized}      & \url{https://github.com/flyingdoog/PGExplainer}  \\ 
% T-GNNExplainer \cite{xia2022explaining}      & \textcolor{blue}{\url{To be added}}  \\ \hline
% TGIB       & \url{https://anonymous.4open.science/r/TGIB-B4D5}  \\ 
\bottomrule[1pt]
\end{tabular}
}
}
\end{table}

% \section{Explanation Visualization} 

% \begin{figure}[H] %%% t: top, b: bottom, h: here
% \begin{center}
% \includegraphics[width=0.99\columnwidth]{}
% \end{center}
% \caption{
% Visualization of explanations on Wikipedia dataset.}
% \label{fig:visual}
% \vspace{-4ex}
% \end{figure}

% \vspace{3mm}

% To illustrate the explanations of Wikipedia dataset provided by TGIB, we  visualized the candidate events considered important for predicting the target event. 
% In Figure~\ref{fig:visual}, the target event is depicted by a red solid line, and the candidate events are depicted by blue solid lines.  

% Although multiple communities coexist within the graph, we can visually identify the components that influence the target event. 
% Furthermore, the explanations provided by the proposed model offer us the opportunity to conduct a further investigation into which users or items play a significant role in relation to the target event. 
% The model’s explanations extend not only to events directly connected but also to events within an $L$-hop distance, which allows for a multifaceted interpretation.

\end{document}